\pdfoutput=1
\documentclass{article}

\usepackage{microtype}
\usepackage{graphicx}
\usepackage{subcaption}

\usepackage{pdfpages}

\usepackage[preprint]{icml2026}

\usepackage{amsmath}
\usepackage{amssymb}
\usepackage{mathtools}
\usepackage{amsthm}

\theoremstyle{plain}
\newtheorem{theorem}{Theorem}[section]
\newtheorem{proposition}[theorem]{Proposition}

\theoremstyle{definition}

\theoremstyle{remark}

\definecolor{myblue}{RGB}{99, 118, 168}
\definecolor{myred}{RGB}{238, 0, 0}

\definecolor{c01}{RGB}{84, 127, 41}
\definecolor{c02}{RGB}{97, 145, 47}
\definecolor{c03}{RGB}{132, 168, 95}
\definecolor{c04}{RGB}{168, 192, 143}
\definecolor{c05}{RGB}{238, 238, 238}
\definecolor{c06}{RGB}{234, 189, 189}
\definecolor{c07}{RGB}{230, 141, 140}
\definecolor{c08}{RGB}{226, 92, 91}
\definecolor{c09}{RGB}{222, 43, 42}

\usepackage[colorlinks=true,
            linkcolor=myred,      
            anchorcolor=myred,    
            citecolor=myblue,  
            ]{hyperref}
\usepackage{url}

\usepackage{wrapfig}
\usepackage{enumitem}
\usepackage{booktabs}
\usepackage{amsthm,amssymb}
\usepackage{multirow}
\usepackage{algorithmic}
\usepackage{xcolor,color}
\usepackage{soul}
\usepackage{tcolorbox}
\usepackage{colortbl}
\usepackage{tikz}
\usepackage{minted}
\usetikzlibrary{calc}
\usepackage{tabularx}
\usepackage{ragged2e}
\newcolumntype{L}{>{\RaggedRight\arraybackslash}X}

\newcommand\eg{\emph{e.g}.} 

\newcommand\ie{\emph{i.e}.}

\usepackage{accents}

\newcommand{\W}{\mathbf{W}}
\newcommand{\A}{\mathbf{A}}

\newcommand{\K}{\mathbf{K}}
\newcommand{\V}{\mathbf{V}}
\newcommand{\Q}{\mathbf{Q}}

\newcommand{\BS}{\mathbf{S}}

\newcommand{\BR}{\mathbf{R}}
\usepackage{bm}

\usepackage{xcolor}

\definecolor{reasonbg}{RGB}{230,230,255}  
\definecolor{verifybg}{RGB}{255,245,205}  

\newcommand{\algshade}[2]{%
  \colorbox{#1}{\parbox{\dimexpr\linewidth-2\fboxsep\relax}{\strut #2\strut}}%
}

\usepackage[textsize=tiny]{todonotes}

\icmltitlerunning{One-Token Verification for Reasoning Correctness Estimation}

\begin{document}

\twocolumn[
  \icmltitle{One-Token Verification for Reasoning Correctness Estimation}



  \icmlsetsymbol{equal}{*}

\begin{icmlauthorlist}
\icmlauthor{Zhan Zhuang}{sustech,cityu}
\icmlauthor{Xiequn Wang}{sustech}
\icmlauthor{Zebin Chen}{sustech}
\icmlauthor{Feiyang Ye}{sustech}
\icmlauthor{Ying Wei}{zju}
\icmlauthor{Kede Ma}{cityu}
\icmlauthor{Yu Zhang}{sustech}
\end{icmlauthorlist}

\icmlaffiliation{sustech}{Southern University of Science and Technology, Shenzhen, China}
\icmlaffiliation{cityu}{City University of Hong Kong, Hong Kong SAR}
\icmlaffiliation{zju}{Zhejiang University, Hangzhou, China}

\icmlcorrespondingauthor{Ying Wei}{ying.wei@zju.edu.cn}
\icmlcorrespondingauthor{Kede Ma}{kede.ma@cityu.edu.hk}
\icmlcorrespondingauthor{Yu Zhang}{yu.zhang.ust@gmail.com}

  \icmlkeywords{Machine Learning, ICML}

  \vskip 0.3in
]



\printAffiliationsAndNotice{}  

\begin{abstract}
Recent breakthroughs in large language models (LLMs) have led to notable successes in complex reasoning tasks, such as mathematical problem solving. A common strategy for improving performance is \textit{parallel thinking}, in which multiple reasoning traces are generated and the final prediction is made using aggregation schemes like majority voting or best-of-$N$ decoding. However, two key challenges persist. First, multi-sample decoding incurs substantial inference latency, especially for long-form outputs. Second, effective mechanisms for reliably assessing the correctness of individual reasoning traces are still limited. To address these challenges, we introduce One-Token Verification~(OTV), a computational method that estimates reasoning correctness in a single forward pass during generation. OTV is activated by a learnable token and integrated into the LLM via low-rank adaptation to probe internal reasoning signals through the key-value cache, supporting token-level correctness estimation at any stage of generation without disrupting primary reasoning. Experiments on mathematical reasoning benchmarks demonstrate that OTV consistently surpasses existing verifiers. Additionally, OTV reduces token usage by up to $90\%$ through correctness-guided early termination, prioritizing shorter, more reliable solutions. 
\end{abstract}

\section{Introduction}
Large language models (LLMs) such as OpenAI o1~\citep{jaech2024openai}, DeepSeek-R1~\citep{guo2025deepseek}, and the Qwen3 series~\citep{yang2025qwen3} have recently demonstrated strong multi-step reasoning capabilities on challenging tasks like mathematical problem solving. These advances are largely driven by training pipelines that combine supervised fine-tuning from human-supplied long chains of thought~\citep{wei2022chain,suzgun2023challenging} and reinforcement learning from outcome- or process-level feedback~\citep{ouyang2022training,shao2024deepseekmath,team2025kimi}. In parallel, \textit{test-time scaling} has emerged as a complementary paradigm that improves accuracy by allocating additional computation during inference~\citep{brown2024large,venktesh2025trust}. A simple yet effective instance is \textit{parallel thinking}, where the model generates multiple candidate solution traces and aggregates them into a final answer.

\begin{figure}[t]
    \centering
    \includegraphics[width=\linewidth]{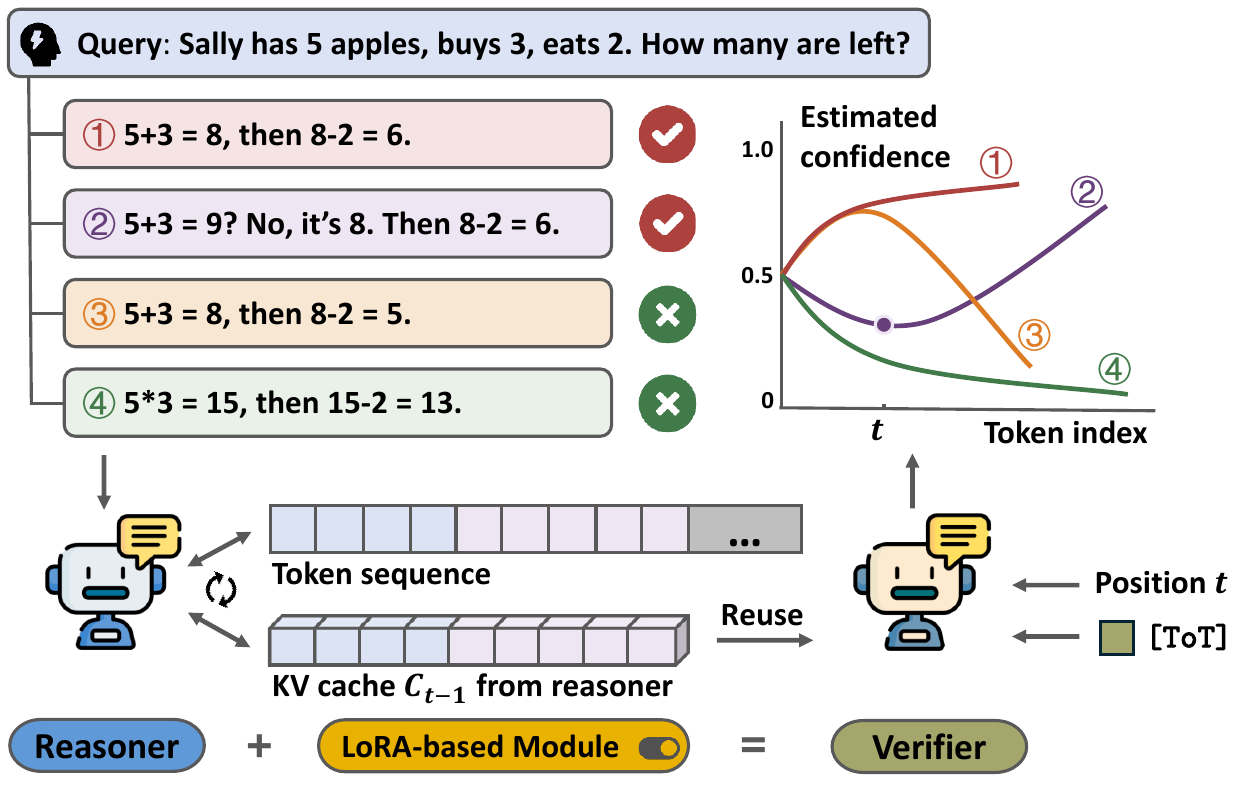}
    \caption{Conceptual illustration of the proposed OTV. By reusing the KV cache and activating a LoRA-based verifier via a special token \texttt{[ToT]}, OTV reliably estimates the correctness of reasoning traces in a single forward pass.}
    \vspace{-0.2in}
    \label{fig:illustration}
\end{figure}

A central challenge in parallel thinking is to reliably assess the correctness of individual reasoning traces. Existing approaches can be broadly grouped into \textit{internal} and \textit{external} verification. Internal methods rely on the model's own token-level uncertainty or calibration\footnote{Calibration refers to the agreement between predicted probabilities and empirical outcome frequencies—for example, among predictions with probability $0.7$, roughly $70\%$ should be correct.} to derive trace-level confidence\footnote{Here, confidence denotes the model’s predicted probability that its generated answer is correct.} scores, which are then used to perform majority voting over diverse traces~\citep{wang2022self, kang2025scalable,fu2025deep,zhang2025reasoning,huang2025efficient}. These techniques are \textit{model-native}, requiring no auxiliary models, but they often suffer from miscalibration and struggle to separate correct from incorrect reasoning traces, particularly for long-form solutions~\citep{huang2023large,xiongcan}. External methods instead train dedicated verifiers~\citep{cobbe2021training,hosseini2024v,zhang2024generative,lightman2023let,wang2024interpretable,yang2024qwen25mathtechnicalreportmathematical,zhao2025majority} to score intermediate steps or final answers. Even though these auxiliary models can deliver richer feedback, they treat the base LLM as a black box and often add substantial inference overhead, while also risking domain mismatch. As a result, their achieved accuracy still remains well below the upper bound suggested by the Pass@$k$ metric~\citep{chen2021evaluating}. Moreover, existing approaches typically defer decision making until complete reasoning traces are generated, because their scoring rules are defined over final answers or full trajectories. When early termination is enabled, performance may degrade noticeably if intermediate signals are unreliable. Consequently, decoding cost, as exacerbated by ``System-2''-style overthinking~\citep{chen2024not}, raises significant efficiency concerns.

These limitations call for \textit{a deeper form of internal verification} that goes beyond surface-level, logit-based heuristics. Such verification should 1) explicitly leverage the internal states of the ongoing reasoning process~\citep{burns2022discovering,azaria2023internal,zhang2025reasoning,li2025diffusion}, 2) provide fine-grained, token-level estimations of reasoning correctness to unlock practical early termination~\citep{fu2025deep,zhang2025reasoning,lee2025tokensupervised}, and 3) be inexpensive to invoke during generation.

To this end, we introduce One-Token Verification~(OTV), a computational method that augments a reasoning LLM with a LoRA-based verifier~\citep{hu2021lora}, operating directly on the model's key-value (KV) cache (see Figure~\ref{fig:illustration}). Concretely, OTV introduces a special \textit{token of truth} (\texttt{[ToT]}). When inserted at inference time, this token leverages LoRA-enhanced cross attention to access the cached KV states accumulated during the ongoing reasoning process. A small regression head maps the last-layer hidden state of \texttt{[ToT]} to a scalar estimate of reasoning correctness, providing token-level verification of the partial trace. Crucially, the LoRA updates are gated so that the base LLM behaves identically to the original reasoner in its default mode, and adopts a verifier role only conditioned on the presence of \texttt{[ToT]} in the input. OTV thus reuses the LLM's full internal computation, and incurs only a single forward pass per verification query.

Training OTV relies solely on cheap pseudo-scores assigned to every token in a reasoning trace, derived from its final correctness label. We further show that OTV training admits parallelization: by inserting \texttt{[ToT]} simultaneously at all candidate positions and reusing a cached prefix, OTV computes correctness scores for the entire trace in a single forward pass. This preserves the token-level parallelism of standard Transformer training. We evaluate OTV on multiple reasoning LLMs (\ie, Qwen3-4B-Instruct, Qwen3-8B, and DAPO-Qwen-32B) and a suite of math benchmarks. Across diverse parallel thinking strategies and several efficient variants, OTV consistently outperforms prior internal and external verifiers, substantially narrowing the gap toward Pass@$k$ while reducing token usage by up to $90\%$ via confidence-guided early termination. Beyond post-trained reasoning LLMs, we show that calibrating OTV on a pretrained base model also yields sizable gains in raw mathematical reasoning accuracy and robustness.

\section{Related Work}

\paragraph{Parallel thinking} 
Test-time scaling has become a 
central paradigm for strengthening LLM reasoning, and it is typically pursued along three axes:
enlarging the effective input context (\eg, retrieval-augmented generation~\citep{lewis2020retrieval} and tool use~\citep{schick2023toolformer}), increasing sequential reasoning depth~\citep{jaech2024openai,muennighoff2025s1}, and widening generation through parallel sampling~\citep{comanici2025gemini,wen2025parathinker,yang2025multiverse,hsu2025group,zheng2025parallel}. Among these, parallel thinking stands out as one of the most widely adopted approaches.  Representative instances include majority voting~\citep{wang2022self}, Best-of-$N$ decoding~\citep{stiennon2020learning}, and tree- or graph-structured search~\citep{yao2023tree,zhang2024accessing}. These methods explore and evaluate multiple (partial) solutions before committing to a final answer, which generally achieve a higher performance ceiling than single-trace decoding~\citep{ghosal2025does}, but at the cost of substantial token overhead. To control this budget, recent work investigates trajectory pruning~\citep{lee2025tokensupervised,wang2025sampling,fu2025deep,huang2025efficient} to terminate low-promise traces early.

\paragraph{Reasoning correctness estimation}
Assessing the correctness of individual reasoning traces~\citep{lee2025evaluating} is critical for reliable early termination of LLMs in parallel thinking. Existing approaches can be grouped into internal and external verification. Internal methods exploit the model's own representations~\citep{lin2022teaching,fadeeva2024fact}. For example, self-consistency~\citep{wang2022self} aggregates diverse traces via majority voting, while self-certainty~\citep{kang2025scalable} derives confidence from output distributions. DeepConf~\citep{fu2025deep} prunes low-confidence traces, and self-calibration~\citep{huang2025efficient} distills confidence scores into a single-pass estimator. Probing-based methods further train prediction heads on last-layer hidden states~\citep{zhang2025reasoning, lee2025tokensupervised}, showing that models internally encode correctness signals that can support calibrated early exits.

External methods instead introduce auxiliary verifiers that operate on the generated text. Outcome reward models assess only the final answer~\citep{cobbe2021training,yu2023ovm,chen2024tree,liu2024acemath,lu2024autopsv,zhang2025rvllm}, whereas process reward models score intermediate steps and aggregate them into an overall correctness estimate~\citep{uesato2022solving,lightman2023let,wang2023math,prmlessons}. 
Subsequent work extends verification to broader domains~\citep{zeng2025versaprm} and richer criteria~\citep{golovneva2022roscoe,wang2024interpretable}, or reframes verification as a reasoning-induced prediction task~\citep{ankner2024critique,zhang2024generative}. Other lines develop critic-style feedback models~\citep{zheng2023judging,zheng2024processbench,ye2025uncertainty} or ensembles over multiple verifiers and solutions~\citep{lifshitz2025multi,zhong2025solvedetectverifyinferencetimescalingflexible,zhao2025majority} to improve reliability. 

Although external verifiers often provide stronger feedback, they ignore the base LLM's internal dynamics and introduce extra inference cost, with potential domain mismatch. In contrast, internal methods are model-native and efficient, but remain susceptible to miscalibration, especially for long-form solutions.

OTV lies between these two extremes. By attaching a LoRA-based module that is activated only in verification mode, OTV allows the same reasoning LLM to also operate as a verifier. It provides token-level correctness estimates with negligible additional computation, thereby combining the model-specificity and efficiency of internal approaches with explicit scoring capabilities of external verifiers.

\section{Proposed Method: OTV}
In this section, we introduce OTV, a computational method for reasoning correctness estimation. OTV integrates a reasoning LLM with 1) a LoRA-based module that is activated only during verification
and 2) a special verification token that probes the model's KV cache. We now describe the four components: the LoRA module design (Sec.~\ref{sec3_1}), KV cache probing (Sec.~\ref{sec_2}), pseudo-labeling (Sec.~\ref{sec_3}), and parallelized implementation (Sec.~\ref{sec_4}).

\subsection{LoRA-based Verification}\label{sec3_1}
LoRA~\citep{hu2021lora} is a widely used parameter-efficient fine-tuning method that augments pretrained weight matrices with low-rank updates. Given a weight matrix $\mathbf{W} \in \mathbb{R}^{d_{\mathrm{out}} \times d_{\mathrm{in}}}$, LoRA introduces two trainable matrices $\BR \in \mathbb{R}^{d_{\mathrm{out}} \times r}$  and $\BS \in \mathbb{R}^{r \times d_{\mathrm{in}}}$ with rank $r \ll\min \{d_{\mathrm{in}}, d_{\mathrm{out}}\}$, and defines the updated weight as $\W_\mathrm{LoRA} =  \W + \BR\BS$. LoRA only trains the adapters, which substantially reduces the trainable-parameter count and optimizer-state memory. 

In OTV, we attach a LoRA-based verification module to selected layers of the base LLM. To preserve the original reasoning ability, we adopt a gating mechanism~\citep{samragh2025your}, which adds the LoRA pathway in parallel to each linear layer and activates it only in verification mode. For an input $\bm x_t \in \mathbb{R}^{d_{\mathrm{in}}\times 1}$ at position $t$, the corresponding output is
\begin{equation}\label{eq:mm}
    \bm z_t = (\W+ m_t\,\BR\BS)\,\bm x_t,
\end{equation}
where $m_t \in \{0,1\}$ is a binary gate. When $m_t=0$, the LLM behaves identically to the original reasoner; when $m_t=1$, the LoRA update is applied, and the model assumes the verifier role.
The gating provides a clean separation between reasoning and verification, and minimizes the risk that fine-tuning degrades the base model's reasoning capabilities.

\subsection{KV Cache-based Internal Representation Probing}\label{sec_2}
During autoregressive decoding, Transformer-based LLMs maintain a KV cache at layer $l$ and position $t$, denoted as $\mathcal{C}_{t}^{(l)} = \{\mathbf{K}_{t}^{(l)}, \mathbf{V}_{t}^{(l)}\}$ where $\mathbf{K}_{t}^{(l)} = [\bm k_1^{(l)}, \ldots, \bm k_{t}^{(l)}]$,  $\mathbf{V}_{t}^{(l)} = [\bm v_1^{(l)}, \ldots, \bm v_{t}^{(l)}]$, and $\bm k_{t}^{(l)}, \bm v_{t}^{(l)} \in \mathbb{R}^{D\times 1}$ are $D$-dimensional key and value vectors, with $\mathcal{C}_{0}^{(l)} = \emptyset$. 
For an $L$-layer LLM, the forward pass at position $t+1$ is functionally determined by the current input $\bm x_{t+1}$ and the accumulated KV cache $\mathcal{C}_{t}=\bigcup_{l=1}^{L}\mathcal{C}^{(l)}_{t}$. Compared with the last-layer hidden states~\citep{zhang2025reasoning, lee2025tokensupervised}, which represent a lossy summary of the preceding context and discard its token-wise and layer-wise structure, the KV cache serves as a \textit{sufficient statistic} of the prefix for the underlying LLM, allowing richer, task-specific pooling over the entire reasoning trajectory.

OTV is designed to explicitly exploit this structure. We introduce a special \emph{token of truth} (\texttt{[ToT]}), which is inserted only in verification mode (\ie, when $m_{t+1}=1$ in Eq.~(\ref{eq:mm})) at an arbitrary position ${t+1}$ to estimate the correctness of the partial trace up to token $t$. Instead of recomputing the prefix, the model reuses the KV cache $\mathcal{C}_{t}$ at every layer and performs a single forward pass augmented with the LoRA-based verification module, described in Sec.~\ref{sec3_1}. 

Let $\bm x_{t+1}^{(l)}$ be the input to layer $l$, with $\bm x_{t+1}^{(0)}$ representing the initial embedding of \texttt{[ToT]}. The LoRA-augmented query, key, and value vectors are given by
\begin{equation}
\begin{cases}
\tilde{\bm q}_{t+1}^{(l)} = \big(\W_{q}^{(l)} + \BR_q^{(l)} \BS_q^{(l)}\big)\, \bm x_{t+1}^{(l)}, \\
\tilde{\bm k}_{t+1}^{(l)} = \big(\W_{k}^{(l)} + \BR_k^{(l)} \BS_k^{(l)}\big)\, \bm x_{t+1}^{(l)}, \\
\tilde{\bm v}_{t+1}^{(l)} = \big(\W_{v}^{(l)} + \BR_v^{(l)} \BS_v^{(l)}\big)\, \bm x_{t+1}^{(l)}.
\end{cases}
\end{equation}
where $\W_{\ast}^{(l)}$, $\ast \in \{q,k,v\}$, denotes the frozen pretrained weights at layer $l$ and $\BR_\ast^{(l)},\BS_\ast^{(l)}$ are the corresponding trainable LoRA matrices. Using the cached prefix $\mathcal{C}_{t}$ and the updated vectors $\tilde {\bm q}_{t+1}^{(l)},\tilde {\bm k}_{t+1}^{(l)},\tilde {\bm v}_{t+1}^{(l)}$, the attention output
 for \texttt{[ToT]} at layer $l$ is
\begin{equation}\label{eq:one_token_compute}
\resizebox{0.9\hsize}{!}{$ 
    \tilde{\bm h}^{(l)}_{t+1} = \left[\,\V^{(l)}_{t},\; \tilde {\bm v}^{(l)}_{t+1}\,\right] \mathrm{softmax}\!\left(\frac{ \big[\,\K^{(l)}_{t},\; \tilde {\bm k}^{(l)}_{t+1}\,\big]^\intercal \tilde {\bm q}_{t+1}^{(l)}}{\sqrt{D}}\right),
$}
\end{equation}
where $\mathrm{softmax}(\cdot)$ is applied column-wise.
After \texttt{[ToT]} is propagated through all $L$ layers, we obtain its final hidden state $\tilde{\bm h}^{(L)}_{t+1} \in \mathbb{R}^{D\times 1}$. OTV applies a three-layer perceptron $g(\cdot)$ to map $\tilde{\bm h}^{(L)}_{t+1}$ to a scalar prediction:
\begin{equation}\label{eq:regression}
    \hat{c}_{t} = g\left(\tilde{\bm h}^{(L)}_{t+1}\right) \in [0,1],
\end{equation}
which estimates the likelihood (\ie, confidence) that the reasoning trajectory is correct up to position $t$. Predicting a continuous score rather than decoding a token from the vocabulary avoids entangling verification with the model’s linguistic prior over specific tokens (\eg, ``correct'' and ``wrong'') and yields a signal that is easier to calibrate and threshold for routing. 

\subsection{Token-level Pseudo-confidence Labeling}\label{sec_3}
OTV requires token-level supervision to train the verifier, but collecting such process-supervision signals~\citep{lightman2023let} or generating search-based rollouts is costly~\citep{wang2023math,luo2024improve,zhang2024rest,feng2024step,setlur2024rewarding,guan2025rstar}. Instead, we derive pseudo-confidence targets solely from outcome-level supervision. Concretely, given a training dataset, we sample a reasoning trace $\bm x_{1:T} = [\bm x_1, \ldots, \bm x_{T}]$ and assign a scalar target $c_t \in [0,1]$ at each token position $t$. We interpret $c_t=0$ as \textit{confidently incorrect},  $c_t=1$ as \textit{confidently correct}, and $c_t=0.5$ as \textit{maximally uncertain}. Let $y \in \{0, 1\}$ indicate the final correctness of the full trace.  In the default setting, we initialize $c_0 = 0.5$ as a neutral prior and progressively bias confidence toward $y$ as evidence accrues. We implement this using a \emph{Linear ramp} that enforces a monotone, linear interpolation from uncertainty to the final outcome:
\begin{equation}\label{eq:psl}
    c_t =\mathrm{linear}(t)=0.5 + (y - 0.5)\frac{t}{T}.
\end{equation}
After assigning token-level pseudo-confidence labels, we train the LoRA-based verification module and the regression head by minimizing the mean squared error (MSE) over response tokens:
\begin{equation}
    \ell = \frac{1}{T}\sum_{t=1}^T(c_t - \hat{c}_t)^2.
\end{equation}
Although we adopt the \textit{linear ramp} as our default pseudo-confidence prior, we also explore several alternative labeling rules (\eg, constant, sigmoid, noise-perturbed, and stepwise ramps). In addition, we present detailed ablation studies to evaluate performance and provide a theoretical analysis of our pseudo-confidence labeling rules in Sec.~\ref{sec:ablation} and
Appendix~\ref{app:theory}, respectively.

\subsection{Parallelization}\label{sec_4}
A key advantage of Transformers~\citep{vaswani2017attention} is their ability to train with token-level parallelism. OTV preserves this property: although the confidence score at position $t$ is obtained by one-token verification (in Eq.~(\ref{eq:one_token_compute})), we can compute scores for all positions in a trace with a single forward pass by probing all prefixes in parallel.

Consider a reasoning trace of length $T$. For each prefix ending at $t\in\{1,\ldots, T\}$, we want a verifier query that attends only to the cached KV $\mathcal{C}_{t}$ associated with that prefix (\ie, the first $t$ tokens). To this end, we construct a \textit{probe} sequence of $T+1$ truth tokens, $\texttt{[ToT]}_{1:T+1}$. In verification mode, we collect their LoRA-augmented projections at layer $l$ as $\widetilde {\mathbf{Q}}^{(l)}_{T+1}, \widetilde {\mathbf{K}}^{(l)}_{T+1}, \widetilde {\mathbf{V}}^{(l)}_{T+1} \in \mathbb{R}^{D\times (T+1)}$. To enforce that probe token $t+1$ only ``sees'' the prefix up to $t$, we use a triangular mask $\mathbf{M} \in \mathbb{R}^{T\times (T+1)}$ defined as:
\begin{equation}
M_{i,j}=
\begin{cases}
0, & i < j,\\
-\infty, & i \ge j,
\end{cases}
\end{equation}
where $i \in \{1,\ldots,T\}$ and $j \in \{1,\ldots,T+1\}$.
At layer $l$, the cache-to-probe attention logits are 
\begin{equation}
\A^{(l)}_{\mathrm{ctp}}
=
\left(\K^{(l)}_{T}\right)^{\intercal}\,\widetilde{\Q}^{(l)}_{T+1}
+ \mathbf{M}.
\end{equation}
To match the single-token computation (in Eq.~(\ref{eq:one_token_compute})), each probe token shall attend only to itself within the set of probe tokens. Accordingly, we compute the per-probe self-logits as the diagonal of the probe-to-probe dot-product matrix:
\begin{equation}
\bm a^{(l)}_{\mathrm{self}}
=
\operatorname{diag}\!\left(\left(\widetilde{\K}^{(l)}_{T+1}\right)^{\intercal}
\widetilde{\Q}^{(l)}_{T+1}\right)
\in \mathbb{R}^{(T+1)\times 1},
\end{equation}
where $\operatorname{diag}(\cdot)$ extracts the diagonal entries of a square matrix and returns them as a column vector.
We then append these self-logits as an additional row and apply a column-wise softmax to obtain the attention probabilities:
\begin{equation}
\begin{aligned}
\A^{(l)}_{T+1}
&=
\begin{bmatrix}
\A^{(l)}_{\mathrm{ctp}}\\
\left(\bm a^{(l)}_{\mathrm{self}}\right)^{\intercal}
\end{bmatrix}
\in \mathbb{R}^{(T+1)\times (T+1)},
\\
\mathbf{P}^{(l)}_{T+1}&=\begin{bmatrix}
\mathbf{P}^{(l)}_{\mathrm{ctp}}\\
\left(\bm p^{(l)}_{\mathrm{self}}\right)^{\intercal}
\end{bmatrix}
=
\operatorname{softmax}\!\left(\frac{\A^{(l)}_{T+1}}{\sqrt{D}}\right).
\end{aligned}
\end{equation}
The resulting attention outputs for all probes at layer $l$ are 
\begin{equation}
\widetilde{\mathbf{H}}^{(l)}_{T+1}
=
\V^{(l)}_{T}\, \mathbf{P}^{(l)}_{\mathrm{ctp}}
+
\widetilde{\V}^{(l)}_{T+1}\,
\operatorname{diag}\!\left(\bm p^{(l)}_{\mathrm{self}}\right)
\in \mathbb{R}^{D\times (T+1)}.
\end{equation}
Here, $\operatorname{diag}(\cdot)$ converts a column vector into a square diagonal matrix. Finally, stacking these computations across all $L$ layers yields $T+1$ final \texttt{[ToT]} hidden states, each of which is mapped to a scalar confidence (in Eq.~(\ref{eq:regression})). This construction is mathematically equivalent to running $T+1$ independent single-token verification queries. However, it consolidates these queries in a single forward pass by reusing the same cached prefix. We apply the MSE loss only at probe positions corresponding to response tokens, excluding those in the prompt and query segment. Algorithmic descriptions of the training and inference procedures are given in Appendix~\ref{appendix:algorithm}.

\section{Experiments}\label{sec:experiments}

In this section, we test OTV for parallel thinking across multiple open-source reasoning LLMs in both standard and advanced mathematical reasoning settings. We consider common aggregation schemes, including self-consistency, weighted self-consistency, and best-of-$N$, along with early-termination variants. We report accuracy and efficiency metrics (\ie, token cost, and verification overhead) and provide a qualitative analysis of OTV confidence trajectories over the course of generation.

\subsection{Experimental Setups}\label{sec:setup}
\paragraph{Models and datasets} We evaluate OTV on three representative open-source reasoning LLMs: \textit{Qwen3-4B-Instruct}, \textit{Qwen3-8B}~\citep{yang2025qwen3}, and \textit{DAPO-Qwen-32B}~\citep{yu2025dapo}. In all cases, the base LLM is kept frozen and used as the reasoner. To cover a range of reasoning difficulty, we consider two experimental regimes. For \textit{standard mathematical reasoning}, we train OTV on MetaMathQA~\citep{yu2023metamath} and test on GSM8K~\citep{cobbe2021training}, a widely adopted benchmark of grade-school math problems. For \textit{advanced mathematical reasoning}, we train on DAPO17K~\citep{yu2025dapo}, a more challenging dataset that provides only final answers (without human-written reasoning traces), and evaluate on the AIME24~\citep{AIME2024} and AIME25~\citep{AIME2025} benchmarks, which comprise contest-style problems from the American Invitational Mathematics Examination. Our main experiments focus on this advanced regime, as it rigorously tests long-horizon, multi-step logical reasoning.

\paragraph{Training protocol} For each training instance, we sample the base LLM $8$ times to obtain diverse reasoning traces, and derive token-level pseudo-confidence targets from the final correctness label to supervise OTV. At inference time, for each test question, we first generate a pool of $256$ complete reasoning traces, setting the temperature parameter to $1.0$, and repeatedly sample subsets from this pool for downstream aggregation. Unless otherwise stated, we fine-tune LoRA modules (applied to query, key, and value projections) together with the regression head using \texttt{LlamaFactory}~\citep{zheng2024llamafactory} for $3$ epochs, with a learning rate of $10^{-4}$ and a batch size of $128$. 

\paragraph{Competing methods}  
We categorize competing methods into internal and external verifiers. Internal methods are model-native, including DeepConf~\citep{fu2025deep}, a training-free confidence estimator derived from token-level log-likelihood, and GenRM~\citep{zhang2024generative}, a reasoning-induced verifier that prompts the model itself with ``Is the answer correct?'' and uses the predicted ``Yes/No'' probability as a confidence score. External methods adopt off-the-shelf reward models, including AceMath-RM-7B~\citep{liu2024acemath}, an outcome-level reward model that scores the full solution, and several process reward models---VersaPRM~\citep{zeng2025versaprm}, Math-Shepherd-Mistral-7B~\citep{wang2023math}, and Qwen2.5-Math PRM variants~\citep{prmlessons}---that assign step-level rewards along the reasoning trace. Following prior practice, we take the reward predicted at the last token as the overall verifier score.

\begin{table*}[t]
\centering
\renewcommand{\arraystretch}{1.0}
\caption{Weighted majority-voting accuracy (\%) on AIME. For process reward models, we use the final token score as the confidence for each reasoning trace. Following~\citet{fu2025deep}, we use method-specific aggregation windows: confidence is computed as the mean score over the last tokens of each trace ($100$ tokens for OTV;  $2,048$ tokens for DeepConf). We then discard the bottom $50\%$ of traces by confidence before performing weighted majority voting. Each run samples $128$ reasoning traces, and all results are averaged over $64$ runs. We also report Pass@$128$ as an oracle upper bound for parallel thinking. Within each column, the best result is highlighted in bold, and the second best is underlined.}
\vspace{-0.2cm}
\begin{small}
\begin{sc}
\resizebox{\textwidth}{!}{
\begin{tabular*}{\textwidth}{@{\extracolsep{\fill}}clcccccc}
\toprule
 & & \multicolumn{2}{c}{{Qwen3-4B}} & \multicolumn{2}{c}{{Qwen3-8B}} &  
 \multicolumn{2}{c}{{DAPO-Qwen-32B}}\\
\cmidrule(lr){3-4}
\cmidrule(lr){5-6}
\cmidrule(lr){7-8}
 & & AIME24 & AIME25 & AIME24 & AIME25 & AIME24 & AIME25   \\
\midrule 
& Pass@128 & 91.46
& 83.13
& 69.58
& 56.98 
& 83.75
& 68.75
\\
\midrule
\multirow{4}{*}{\rotatebox{90}{\textit{Internal}}} & Pass@1   & 60.29 & 46.67 & 26.29 & 19.32
& 51.77 & 36.42\\
 & Maj@128 & 75.42\small{$\pm$1.61} & 66.46\small{$\pm$1.16} & 44.22\small{$\pm$2.11} &28.44\small{$\pm$2.06}
 & 66.72\small{$\pm$0.93} & 41.77\small{$\pm$1.76}\\
 & DeepConf & 77.76\small{$\pm$2.43} & \underline{66.77}\small{$\pm$1.18} & \underline{45.73}\small{$\pm$1.50} & 31.87\small{$\pm$2.35} & 
 66.77\small{$\pm$1.17} &44.79\small{$\pm$1.65}\\
 & GenRM & \underline{79.11}\small{$\pm$2.06} & 66.72\small{$\pm$1.82}& 44.38\small{$\pm$2.27}  & \underline{32.66}\small{$\pm$2.30}
 & 66.72\small{$\pm$0.59} & 42.55\small{$\pm$1.66} \\
\midrule
 \multirow{5}{*}{\rotatebox{90}{\textit{External}}} & AceMath-RM-7B & 67.66\small{$\pm$1.74} & 60.00\small{$\pm$2.36} & 44.90\small{$\pm$2.70} & 26.72\small{$\pm$2.18}
 & 61.15\small{$\pm$2.22} & 42.76\small{$\pm$2.00}\\
 & VersaPRM-8B & 75.52\small{$\pm$1.69} & 66.72\small{$\pm$1.10} & 44.48\small{$\pm$2.30} & 29.11\small{$\pm$2.22} 
 & 66.98\small{$\pm$0.97} & 43.18\small{$\pm$1.81}\\
 & Math-Shepherd-7B & 74.27\small{$\pm$1.50} & 66.56\small{$\pm$1.02} & 44.22\small{$\pm$1.79} & 26.04\small{$\pm$1.94}
 & 65.73\small{$\pm$2.24} & 43.65\small{$\pm$1.93}\\
 & Qwen2.5-PRM800K-7B &  75.16\small{$\pm$1.66} & 66.35\small{$\pm$1.14} & 44.22\small{$\pm$2.30} & 29.32\small{$\pm$2.57}
 & 67.03\small{$\pm$1.46} & \underline{48.18}\small{$\pm$3.48}\\
 & Qwen2.5-PRM-7B & 78.18\small{$\pm$1.85} & 63.49\small{$\pm$2.39} & 45.10\small{$\pm$2.35} & 29.48\small{$\pm$2.63}
 & \underline{67.45}\small{$\pm$2.26} & 46.77\small{$\pm$3.06}\\
\midrule
 & OTV (Ours) & \textbf{83.33}\small{$\pm$1.57} & \textbf{69.32}\small{$\pm$1.46} & \textbf{46.56}\small{$\pm$1.25} & \textbf{33.85}\small{$\pm$1.69}
 & \textbf{70.83}\small{$\pm$1.86} & \textbf{49.58}\small{$\pm$1.10}\\
\bottomrule
\end{tabular*}
}
\end{sc}
\end{small}
\label{tab:majority_voting}
\end{table*}

\paragraph{Inference-time aggregation strategies}  
We evaluate three families of parallel-thinking decoders. 
\begin{itemize}[leftmargin=15pt]
    \item \textbf{Self-consistency} (\ie, majority voting) and its weighted variant~\citep{wang2022self}, where each trace's final answer is weighted by its estimated confidence score.
    \item \textbf{Best-of-$N$}, which selects the highest-scoring trace among $N$ candidates.
    \item \textbf{Efficient Best-of-$N$ variants},
    which reduce computation via pruning or early termination:
    \begin{itemize}[leftmargin=1.5em]
        \item \textit{Drop@$10$}: Periodic score-based pruning.  For every $10$ generated tokens, it drops the currently lowest-scoring trace and continues decoding until only one trace remains. 
        \item \textit{Stop@$600$}: Fixed-length early commitment. Once traces reach $600$ tokens, it terminates all but the highest-scoring trace and continues generation only for the surviving trace to completion.
        \item \textit{Halve@$300$}: Stage-wise halving. For every $300$ generated tokens, it removes the bottom half of traces according to the current scores, repeating this ``halve-and-continue'' procedure until one trace remains.  
    \end{itemize}
\end{itemize}
We also report Pass@$1$ (single-trace accuracy), Pass@$k$ (oracle success among $k$ traces), and Maj@$k$ (unweighted majority-voting accuracy over $k$ traces). Notably, Pass@$k$ serves as an \emph{upper bound} for any aggregation method that must operate without ground-truth at inference time. While self-consistency and standard Best-of-$N$ require completing all traces with comparable token cost, efficient Best-of-$N$ variants can substantially reduce decoding. For example, with $N=128$ and an average trace length of $6,000$ tokens, the pruning-based variants reduce total generation by nearly $90\%$. On the other hand, OTV introduces negligible computation and token overhead because each verification query is implemented as a single forward pass (see Appendix~\ref{sec:output-length}).

\begin{table*}[t]
\centering
\renewcommand{\arraystretch}{1.0}
\caption{Accuracy (\%) and average output length (in parentheses) on AIME at $N=128$ under Best-of-$N$ and three compute-efficient variants (\ie, Drop@$10$, Stop@$600$, and Halve@$300$). All results are averaged over $64$ runs. }
\vspace{-0.1in}
\begin{small}
\begin{sc}
\resizebox{\textwidth}{!}{
\begin{tabular}{lcccccccc}
\toprule
& \multicolumn{2}{c}{{Best-of-$N$}} & \multicolumn{2}{c}{{Drop@$10$}} &  \multicolumn{2}{c}{{Stop@$600$}} & \multicolumn{2}{c}{{Halve@$300$}}\\
\cmidrule(lr){2-3}
\cmidrule(lr){4-5}
\cmidrule(lr){6-7}
\cmidrule(lr){8-9}
& AIME24 & AIME25 & AIME24 & AIME25 & AIME24 & AIME25  & AIME24 & AIME25 \\
\midrule 
\multicolumn{9}{c}{{Qwen3-4B}} \\
\midrule
DeepConf & 64.95 (9664) & 43.07 (9322) & 62.86 (7044) & 40.78 (7260)& 59.90 (7150) & 40.89 (7084) & 61.98 (6742) & 43.54 (6318)\\
VersaPRM-8B & 54.48 (6560) & 43.28 (6132) & 60.52 (2589) & 43.44 (6334) & \underline{63.75} (3270) & 46.77 (6113) & 59.06 (2981) & 37.24 (6438)\\
Math-Shepherd-7B & \textbf{73.59} (5820) & \textbf{66.51} (5824) & 61.61 (5679) & \underline{46.25} (6116) & 57.45 (5989) & 47.45 (6218) & 54.64 (5722) & 45.36 (6118)\\
Qwen2.5-PRM800K-7B &  69.90 (4891) & 45.10 (6202) & \textbf{66.77} (2196) & 42.86 (6445) &  60.31 (3434) & \textbf{49.53} (6409) & 62.24 (2919) & 45.21 (6619)\\
Qwen2.5-PRM-7B & 71.77 (3720) & 53.33 (3948) & \underline{63.80} (3040) & 45.83 (6588) & \textbf{65.73} (4211) & 44.95 (6304) & \underline{66.46} (3173) & \underline{45.73} (6416)\\
\midrule
OTV (Ours) & \underline{73.44} (5447) & \underline{53.91} (5416) & 63.39 (4427) & \textbf{46.46} (3225)  & \underline{63.75} (4431) & \underline{49.11} (6542) & \textbf{67.03} (4132) & \textbf{49.02} (3170)\\
\midrule
\multicolumn{9}{c}{{DAPO-Qwen-32B}} \\
\midrule
DeepConf & 53.92 (5101) & 38.91 (3957)  & 50.52 (6382) & 37.08 (4398) & 51.82 (7176) & 37.76 (4353) & 50.94 (3772)& 36.77 (4449)\\
VersaPRM-8B & 48.80 (5061) & 31.04 (4447) & \textbf{59.79} (5263) & 39.48 (4796) & 53.12 (5432) & 37.66 (5005) & 49.32 (5744) & 36.61 (5046)\\
Math-Shepherd-7B & \underline{62.34} (5051) & 42.40 (4570) & 55.52 (4819) & 39.22 (4475) & \underline{55.21} (5555) & 36.46 (5063) & \textbf{58.80} (4983) & \underline{42.76} (4919) \\
Qwen2.5-PRM800K-7B & 54.17 (4722) & \textbf{47.81} (4426) & 53.28 (5585) & \underline{41.20} (4689) & 49.48 (5260) & \underline{40.89} (4523) & 47.40 (5732) & 40.00 (4528) \\
Qwen2.5-PRM-7B & 57.03 (4888) & \underline{47.24} (4481) & 55.52 (5525) & 33.65 (5351) & \textbf{55.57} (4939) & 35.31 (5068) & 51.98 (5660) & 36.09 (4967)\\
\midrule
OTV (Ours) &  \textbf{63.18} (4623) & 47.08 (4079) & \underline{55.95} (3397) & \textbf{50.68} (2926) & 53.54 (3211)& \textbf{48.23} (2577) & \underline{55.05} (3436) & \textbf{46.98} (2991)\\
\bottomrule
\end{tabular}
}
\end{sc}
\end{small}
\label{tab:bon}
\end{table*}

\subsection{Main Results}\label{subsec:quanr}
We first evaluate OTV on advanced mathematical reasoning in an \textit{offline} setting, where the complete set of sampled reasoning traces is available and aggregation can be applied post hoc.  
Table~\ref{tab:majority_voting} reports weighted majority-voting results using $128$ traces per run. We weight votes by trace-level confidence scores and discard the lowest-confidence $50\%$ of traces prior to aggregation. 

Across all three backbone scales, OTV consistently delivers the best accuracy on both AIME24 and AIME25. Relative to unweighted majority voting, the gains are substantial (\eg, $+7.9$ points for Qwen3-4B on AIME24 and $+7.8$ points for DAPO-Qwen-32B on AIME25), indicating that OTV's confidence estimates correlate more strongly with trace correctness than heuristic confidence baselines. Compared with other internal methods, OTV is consistently superior, suggesting that a learned verifier that probes the model's internal states provides a more faithful correctness signal than logit-based confidence or self-queried ``yes/no'' verification. Moreover, external verifiers also fall short of OTV across all settings, despite extensive training and broad adoption, highlighting the difficulty of transferring generic reward models across backbones and shifting trace distributions. 

\begin{figure*}[!t]
    \centering  \includegraphics[width=\linewidth]{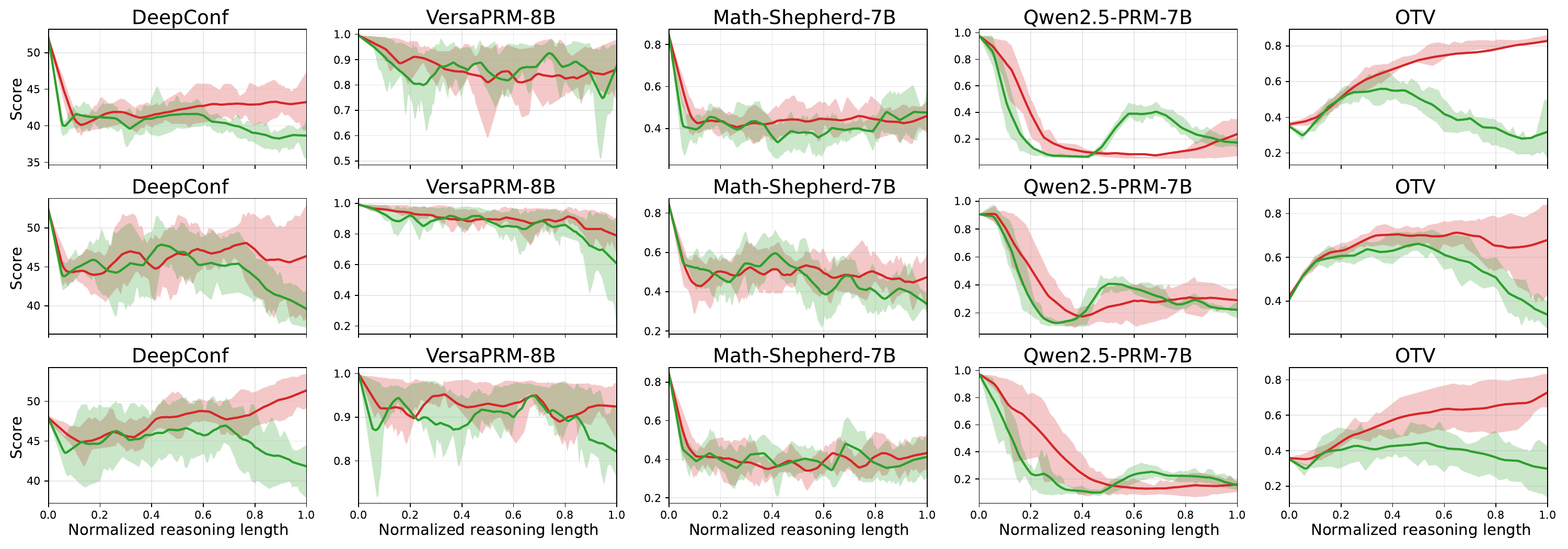}
    \vspace{-0.2in}
    \caption{Confidence dynamics on three representative AIME24 problems (\ie, \#3, \#9, and \#22). For each predictor, we plot the mean confidence trajectory over $32$ sampled reasoning traces, shown separately for traces that end with correct (red) and incorrect (green) final answers. Shaded bands around each mean curve denote the inter-quantile range across traces, summarizing cross-trace variability. }
    \label{fig:trace_dynamics}
\end{figure*}

\begin{figure*}[!t]
    \centering
    \includegraphics[width=1\linewidth]{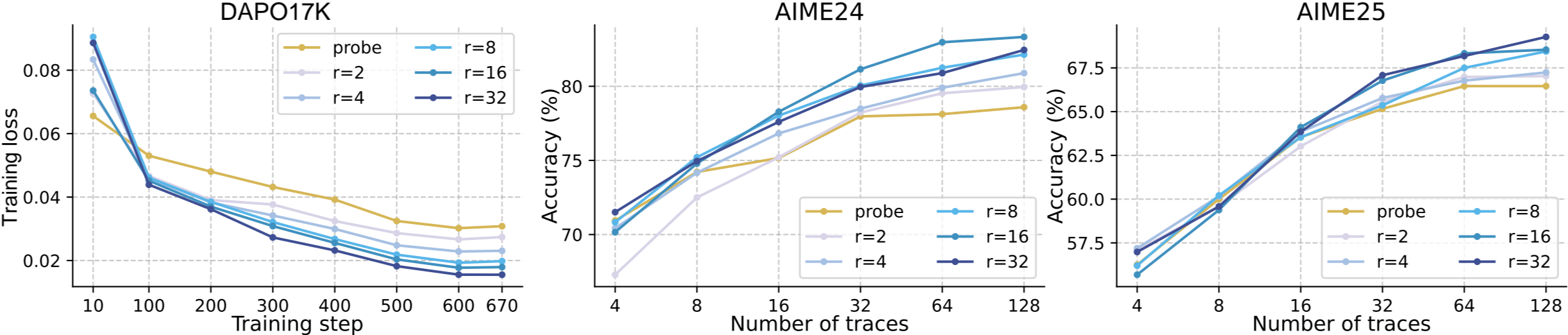}
    \caption{Effect of verifier capacity (\ie, LoRA rank) on training dynamics and downstream voting accuracy. Left: verifier training loss over optimization steps for the ``probe'' baseline, which trains only the regression head (no LoRA; no KV cache) and for OTV with varying LoRA ranks. Middle/Right: Weighted majority-voting accuracy on AIME, as a function of the number of sampled traces. All results are averaged over $64$ runs.}
    \label{fig:lora_rank}
\end{figure*}

Table~\ref{tab:bon} compares OTV with standard Best-of-$N$ and three efficient variants (\ie, Drop@$10$, Stop@$600$, and Halve@$300$) at $N=128$ in an \textit{online} setting. A key finding is that vanilla Best-of-$N$ can be suboptimal in accuracy: averaged across tasks, it lags behind weighted majority voting (at the same decoding cost) by more than $10$ points, reflecting the brittleness of selecting a single trace solely by the highest predicted confidence score. 

Across the efficient variants, OTV attains the best or near-best accuracy in most configurations. Moreover, unlike competing methods, it often yields shorter final traces than standard Best-of-$N$ (approximately $20\%$ fewer tokens on average). This behavior is consistent with OTV's linearly increasing pseudo-confidence target: given two correct traces, the shorter one accumulates confidence faster, reaches a higher score earlier, and is therefore more likely to survive pruning or early-stopping criteria. Among the efficient strategies, Halve@$300$ offers the most favorable accuracy-efficiency trade-off (see also Appendix~\ref{sec:output-length}), where OTV exhibits clear gains. Overall, these results indicate that model-specific verification provides a reliable and compute-efficient alternative to generic score-based selection.

We further validate the generalizability of OTV across pretrained \emph{base} models and diverse architectures (\eg, LLaMA), observing consistent improvements. Details are provided in Appendices~\ref{app:base_model} and~\ref{app:other_models}.

\begin{table*}[!t]
\centering
\renewcommand{\arraystretch}{1.0}
\caption{Ablation of token-level pseudo-confidence labeling heuristics for training OTV.}
\vspace{-0.1in}
\begin{small}
\begin{sc}
\resizebox{\textwidth}{!}{
\begin{tabular}{lcccccc}
\toprule
 & 
 \multicolumn{2}{c}{Maj@$128$} &  \multicolumn{2}{c}{{Best-of-$N$}} & \multicolumn{2}{c}{{Halve@$300$}}\\
\cmidrule(lr){2-3}
\cmidrule(lr){4-5}
\cmidrule(lr){6-7}
& AIME24 & AIME25 & AIME24 & AIME25  & AIME24 & AIME25 \\
\midrule 
Constant label & 81.41\small{$\pm$1.65} & 68.49\small{$\pm$2.04} &  72.76 ({4,340})& 54.27 ({4,362}) & 66.41 ({3,951}) & \textbf{49.27} (4,503)\\
Sigmoid ramp & 80.05\small{$\pm$1.91} & 66.51\small{$\pm$0.92} & \textbf{77.40} (4,617)&  \textbf{56.41} (4,426) & 68.44 (4,210) & 46.30 (4,294)\\
Noise-perturbed ramp & 81.87\small{$\pm$1.65} &  67.92\small{$\pm$1.61} & 72.60 (5,226) & 54.90 (5,152) & \textbf{71.30} (4,134) & 48.33 (5,736)\\
Stepwise ramp & \underline{82.40}\small{$\pm$1.50} & \underline{68.80}\small{$\pm$1.60} & \underline{77.29} (5,351) & \underline{55.36} (5,452) & \underline{70.63} (4,231) & 37.60 (7,095)\\
Linear ramp (Default)   & \textbf{83.33}\small{$\pm$1.57} & \textbf{69.32}\small{$\pm$1.46}& 73.44 (5,447) & 53.91 (5,416) & 67.03 (4,132) & \underline{49.02} ({3,170}) \\
\bottomrule
\end{tabular}
}
\vspace{-0.05in}
\end{sc}
\end{small}
\label{tab:priors}
\end{table*}

\begin{table*}[!t]
\centering
\renewcommand{\arraystretch}{1.0}
\caption{Effect of trace-level confidence aggregation and filtering on weighted majority voting. We convert OTV's token-level confidence estimates into a single trace-level score by aggregating over the last few tokens, discard the lowest-confidence $\rho$ fraction of traces (\ie, $0\%$, $25\%$, $50\%$, and $75\%$), and evaluate three aggregation operators.}
\vspace{-0.1in}
\begin{small}
\begin{sc}
\resizebox{\textwidth}{!}{
\begin{tabular}{lcccccccccccc}
\toprule
 & \multicolumn{4}{c}{$\max$} & \multicolumn{4}{c}{$\mathrm{mean}$} &  \multicolumn{4}{c}{$\min$}\\
\cmidrule(lr){2-5}
\cmidrule(lr){6-9}
\cmidrule(lr){10-13}
& $0\%$ & $25\%$ & $50\%$ & $75\%$ & $0\%$ & $25\%$ & $50\%$ & $75\%$ & $0\%$ & $25\%$ & $50\%$ & $75\%$\\
\midrule 
All tokens &75.78 & 76.82& 80.42& 80.36& 75.94 & 79.53&82.19& 80.00 & 80.10 & 80.83 & 81.25 & 79.17 \\
Last $10\%$ tokens & 76.77 & 80.21 &82.76 & 81.46 & 82.19 & 83.02 & 83.28 & 82.19 & 83.07 & 83.07 & 83.12 & 82.29\\
Last $1,600$ tokens & 75.78& 78.65& 81.25& 82.14& 80.99 & 82.71 & 83.23 & 82.24   & 83.23 & 83.23 & 82.97 & 79.95 \\
Last $400$ tokens& 77.50 &80.36 &82.66 &82.86 & 81.93 & 82.81 & 83.33 & 83.23 & 82.86 & 83.23 & 83.28 & \underline{83.75}\\
Last $100$ tokens & 78.39 & 80.21 & 82.34 & 81.82 & 82.08 & 82.97 & 83.33 & 82.76 & 82.86 & 82.60 & 82.76 & \textbf{83.96}\\
\bottomrule
\end{tabular}
}
\end{sc}
\end{small}
\label{tab:filter}
\end{table*}

\subsection{Visualization}
Figure~\ref{fig:trace_dynamics} provides a qualitative view of how different confidence predictors evolve throughout generation on three representative AIME 24 problems (\ie, \#3, \#9, and \#22). We plot, for each verifier, the \textit{mean} confidence trajectory over $32$ sampled traces, shown separately for correct (red) and incorrect (green) solutions. The shaded region around each mean curve reflects cross-trace variability (\ie, the $0.2$–$0.8$ inter-quantile range), facilitating comparisons of overall trends across verifiers. Additional OTV visualizations for all AIME problems are provided in Appendix~\ref{appendix:morevis}.

A consistent pattern emerges across all three cases: DeepConf and prior process reward models often produce highly entangled confidence curves, with substantial overlap between correct and incorrect traces, which limits their ability to discriminate promising candidates from failures early in decoding. In contrast, OTV exhibits noticeably clearer stratification: confidence typically increases over the course of reasoning for correct traces, whereas incorrect traces remain comparatively suppressed. This enhanced separability aligns with the quantitative gains reported in Sec.~\ref{subsec:quanr}, and helps explain OTV's stronger performance under score-based selection and pruning and early-termination regimes. 

Appendix~G provides additional token-level visualizations that align confidence curves with the generated text. This finer-grained analysis illustrates how OTV's scores evolve within different solutions---often rising sharply after key computational steps---thereby improving interpretability at the granularity of individual reasoning steps.

\subsection{Ablation Studies}
\label{sec:ablation}
We conduct ablation studies to quantify how individual design choices contribute to OTV's performance. Specifically, we examine 1) the LoRA rank used in the verification module, 2) alternative pseudo-confidence labeling rules, and 3) the impact of trace-level confidence aggregation and filtering on weighted majority voting.

\paragraph{Effect of LoRA rank}
We first vary the LoRA rank $r$ while keeping the underlying reasoner fixed, and evaluate performance under weighted majority voting as the number of sampled traces increases. As shown in Figure~\ref{fig:lora_rank}, we observe a clear capacity-performance trade-off: increasing $r$ reduces the verifier's training loss and improves downstream accuracy, with substantial gains already at moderate ranks (\eg, $r=16$). Importantly, OTV consistently outperforms the ``probe'' baseline, which trains only the regression head (\ie, without LoRA fine-tuning or KV cache reuse). This suggests that the verifier benefits from both 1) additional adaptation capacity and 2) richer access to the reasoning trajectory via the KV cache, rather than relying solely on final-layer hidden states~\citep{zhang2025reasoning}.

\paragraph{Pseudo-confidence labeling}
We next ablate the heuristic pseudo-confidence targets $c_t$ defined in Sec.~\ref{sec_3}, while keeping the voting protocol fixed. Table~\ref{tab:priors} shows that \textit{constant label}, which applies a hindsight-style target uniformly over the entire trace, tends to bias the verifier toward shorter traces and slightly degrades the accuracy of weighted majority voting. In contrast, ramp-based alternatives achieve comparable performance while mitigating this uniform-hindsight bias by imposing a monotonic progression toward the final outcome. We thus adopt the \textit{linear ramp} as the default due to its simplicity and stable performance. 

\paragraph{Trace-level confidence aggregation and filtering} We further examine how to map token-level confidence estimates to a single trace-level score for weighted majority voting. Following the DeepConf-style protocol~\citep{fu2025deep}, we aggregate confidence over the last segment of each trace and then remove the bottom $\rho$ fraction of traces before voting. Table~\ref{tab:filter} reveals two consistent trends. First, aggregating over the tail of the trajectory (\eg, the last $100$ tokens) is more reliable than using the full trace, supporting the intuition that late-step verification better captures the fully formed reasoning state and final answer. Second, moderate-to-aggressive filtering improves robustness by suppressing low-confidence candidates, with mean/min aggregation over roughly the last $100$ tokens performing the best in our setting.

\section{Conclusion}
In this work, we 
have introduced OTV, an efficient, model-specific computational method for estimating token-level reasoning correctness. OTV equips a reasoning LLM with a LoRA-gated verification pathway that is activated only when a dedicated \texttt{[ToT]} is inserted, allowing the model to enter verification mode without perturbing its default reasoning behavior. By probing the model's KV cache and producing a scalar confidence score through a small regression head, OTV supports anytime verification at the cost of just a single forward pass per query. Experiments across multiple reasoning LLMs and math benchmarks show that OTV consistently improves parallel-thinking aggregation and pruning strategies relative to existing internal and external verifiers, while delivering substantial efficiency gains via confidence-guided early termination. We further discuss limitations and future research directions in Appendix~\ref{app:future}.

\section*{Impact Statement}
This paper presents work with the goal of advancing the field of machine learning. There are many potential societal consequences of our work, none of which we feel need to be specifically highlighted here.


\begin{thebibliography}{75}
\providecommand{\natexlab}[1]{#1}
\providecommand{\url}[1]{\texttt{#1}}
\expandafter\ifx\csname urlstyle\endcsname\relax
  \providecommand{\doi}[1]{doi: #1}\else
  \providecommand{\doi}{doi: \begingroup \urlstyle{rm}\Url}\fi

\bibitem[Ankner et~al.(2024)Ankner, Paul, Cui, Chang, and Ammanabrolu]{ankner2024critique}
Ankner, Z., Paul, M., Cui, B., Chang, J.~D., and Ammanabrolu, P.
\newblock {Critique-out-Loud} reward models.
\newblock \emph{arXiv preprint arXiv:2408.11791}, 2024.

\bibitem[Azaria \& Mitchell(2023)Azaria and Mitchell]{azaria2023internal}
Azaria, A. and Mitchell, T.
\newblock The internal state of an {LLM} knows when it's lying.
\newblock \emph{arXiv preprint arXiv:2304.13734}, 2023.

\bibitem[Brown et~al.(2024)Brown, Juravsky, Ehrlich, Clark, Le, R{\'e}, and Mirhoseini]{brown2024large}
Brown, B., Juravsky, J., Ehrlich, R., Clark, R., Le, Q.~V., R{\'e}, C., and Mirhoseini, A.
\newblock Large language monkeys: Scaling inference compute with repeated sampling.
\newblock \emph{arXiv preprint arXiv:2407.21787}, 2024.

\bibitem[Burns et~al.(2022)Burns, Ye, Klein, and Steinhardt]{burns2022discovering}
Burns, C., Ye, H., Klein, D., and Steinhardt, J.
\newblock Discovering latent knowledge in language models without supervision.
\newblock \emph{arXiv preprint arXiv:2212.03827}, 2022.

\bibitem[Chen et~al.(2021)Chen, Tworek, Jun, Yuan, Pinto, Kaplan, Edwards, Burda, Joseph, Brockman, et~al.]{chen2021evaluating}
Chen, M., Tworek, J., Jun, H., Yuan, Q., Pinto, H. P.~O., Kaplan, J., Edwards, H., Burda, Y., Joseph, N., Brockman, G., et~al.
\newblock Evaluating large language models trained on code.
\newblock \emph{arXiv preprint arXiv:2107.03374}, 2021.

\bibitem[Chen et~al.(2024{\natexlab{a}})Chen, Xu, Liang, He, Pang, Yu, Song, Liu, Zhou, Zhang, et~al.]{chen2024not}
Chen, X., Xu, J., Liang, T., He, Z., Pang, J., Yu, D., Song, L., Liu, Q., Zhou, M., Zhang, Z., et~al.
\newblock Do {NOT} think that much for 2+3=? {O}n the overthinking of {o1}-like {LLMs}.
\newblock \emph{arXiv preprint arXiv:2412.21187}, 2024{\natexlab{a}}.

\bibitem[Chen et~al.(2024{\natexlab{b}})Chen, White, Mooney, Payani, Su, and Sun]{chen2024tree}
Chen, Z., White, M., Mooney, R., Payani, A., Su, Y., and Sun, H.
\newblock When is tree search useful for {LLM} planning? {It} depends on the discriminator.
\newblock \emph{arXiv preprint arXiv:2402.10890}, 2024{\natexlab{b}}.

\bibitem[Cobbe et~al.(2021)Cobbe, Kosaraju, Bavarian, Chen, Jun, Kaiser, Plappert, Tworek, Hilton, Nakano, et~al.]{cobbe2021training}
Cobbe, K., Kosaraju, V., Bavarian, M., Chen, M., Jun, H., Kaiser, L., Plappert, M., Tworek, J., Hilton, J., Nakano, R., et~al.
\newblock Training verifiers to solve math word problems.
\newblock \emph{arXiv preprint arXiv:2110.14168}, 2021.

\bibitem[Comanici et~al.(2025)Comanici, Bieber, Schaekermann, Pasupat, Sachdeva, Dhillon, Blistein, Ram, Zhang, Rosen, et~al.]{comanici2025gemini}
Comanici, G., Bieber, E., Schaekermann, M., Pasupat, I., Sachdeva, N., Dhillon, I., Blistein, M., Ram, O., Zhang, D., Rosen, E., et~al.
\newblock {Gemini 2.5}: Pushing the frontier with advanced reasoning, multimodality, long context, and next generation agentic capabilities.
\newblock \emph{arXiv preprint arXiv:2507.06261}, 2025.

\bibitem[Fadeeva et~al.(2024)Fadeeva, Rubashevskii, Shelmanov, Petrakov, Li, Mubarak, Tsymbalov, Kuzmin, Panchenko, Baldwin, et~al.]{fadeeva2024fact}
Fadeeva, E., Rubashevskii, A., Shelmanov, A., Petrakov, S., Li, H., Mubarak, H., Tsymbalov, E., Kuzmin, G., Panchenko, A., Baldwin, T., et~al.
\newblock Fact-checking the output of large language models via token-level uncertainty quantification.
\newblock \emph{arXiv preprint arXiv:2403.04696}, 2024.

\bibitem[Feng et~al.(2024)Feng, Kong, Ma, Zhang, Yin, Wang, Pang, and Yang]{feng2024step}
Feng, S., Kong, X., Ma, S., Zhang, A., Yin, D., Wang, C., Pang, R., and Yang, Y.
\newblock Step-by-step reasoning for math problems via twisted sequential {Monte Carlo}.
\newblock \emph{arXiv preprint arXiv:2410.01920}, 2024.

\bibitem[Fu et~al.(2025)Fu, Wang, Tian, and Zhao]{fu2025deep}
Fu, Y., Wang, X., Tian, Y., and Zhao, J.
\newblock Deep think with confidence.
\newblock \emph{arXiv preprint arXiv:2508.15260}, 2025.

\bibitem[Ghosal et~al.(2025)Ghosal, Chakraborty, Reddy, Lu, Wang, Manocha, Huang, Ghavamzadeh, and Bedi]{ghosal2025does}
Ghosal, S.~S., Chakraborty, S., Reddy, A., Lu, Y., Wang, M., Manocha, D., Huang, F., Ghavamzadeh, M., and Bedi, A.~S.
\newblock Does thinking more always help? {Understanding} test-time scaling in reasoning models.
\newblock \emph{arXiv preprint arXiv:2506.04210}, 2025.

\bibitem[Golovneva et~al.(2022)Golovneva, Chen, Poff, Corredor, Zettlemoyer, Fazel-Zarandi, and Celikyilmaz]{golovneva2022roscoe}
Golovneva, O., Chen, M., Poff, S., Corredor, M., Zettlemoyer, L., Fazel-Zarandi, M., and Celikyilmaz, A.
\newblock {ROSCOE}: A suite of metrics for scoring step-by-step reasoning.
\newblock \emph{arXiv preprint arXiv:2212.07919}, 2022.

\bibitem[Guan et~al.(2025)Guan, Zhang, Liu, Shang, Sun, Zhu, Yang, and Yang]{guan2025rstar}
Guan, X., Zhang, L.~L., Liu, Y., Shang, N., Sun, Y., Zhu, Y., Yang, F., and Yang, M.
\newblock {rStar-Math}: Small {LLMs} can master math reasoning with self-evolved deep thinking.
\newblock \emph{arXiv preprint arXiv:2501.04519}, 2025.

\bibitem[Guo et~al.(2025)Guo, Yang, Zhang, Song, Zhang, Xu, Zhu, Ma, Wang, Bi, et~al.]{guo2025deepseek}
Guo, D., Yang, D., Zhang, H., Song, J., Zhang, R., Xu, R., Zhu, Q., Ma, S., Wang, P., Bi, X., et~al.
\newblock Deepseek-{R1}: Incentivizing reasoning capability in {LLMs} via reinforcement learning.
\newblock \emph{arXiv preprint arXiv:2501.12948}, 2025.

\bibitem[Hosseini et~al.(2024)Hosseini, Yuan, Malkin, Courville, Sordoni, and Agarwal]{hosseini2024v}
Hosseini, A., Yuan, X., Malkin, N., Courville, A., Sordoni, A., and Agarwal, R.
\newblock {V-STaR}: Training verifiers for self-taught reasoners.
\newblock In \emph{Conference on Language Modeling}, 2024.

\bibitem[Hsu et~al.(2025)Hsu, Buffelli, McGowan, Liao, Chen, Vakili, and Shiu]{hsu2025group}
Hsu, C.-J., Buffelli, D., McGowan, J., Liao, F.-T., Chen, Y.-C., Vakili, S., and Shiu, D.-s.
\newblock {Group Think}: Multiple concurrent reasoning agents collaborating at token level granularity.
\newblock \emph{arXiv preprint arXiv:2505.11107}, 2025.

\bibitem[Hu et~al.(2022)Hu, Shen, Wallis, Allen-Zhu, Li, Wang, Wang, and Chen]{hu2021lora}
Hu, E.~J., Shen, Y., Wallis, P., Allen-Zhu, Z., Li, Y., Wang, S., Wang, L., and Chen, W.
\newblock {LoRA}: Low-rank adaptation of large language models.
\newblock In \emph{International Conference on Learning Representations}, 2022.

\bibitem[Huang et~al.(2025)Huang, Huang, Leng, Liu, and Huang]{huang2025efficient}
Huang, C., Huang, L., Leng, J., Liu, J., and Huang, J.
\newblock Efficient test-time scaling via self-calibration.
\newblock \emph{arXiv preprint arXiv:2503.00031}, 2025.

\bibitem[Huang et~al.(2023)Huang, Chen, Mishra, Zheng, Yu, Song, and Zhou]{huang2023large}
Huang, J., Chen, X., Mishra, S., Zheng, H.~S., Yu, A.~W., Song, X., and Zhou, D.
\newblock Large language models cannot self-correct reasoning yet.
\newblock \emph{arXiv preprint arXiv:2310.01798}, 2023.

\bibitem[Jaech et~al.(2024)Jaech, Kalai, Lerer, Richardson, El-Kishky, Low, Helyar, Madry, Beutel, Carney, et~al.]{jaech2024openai}
Jaech, A., Kalai, A., Lerer, A., Richardson, A., El-Kishky, A., Low, A., Helyar, A., Madry, A., Beutel, A., Carney, A., et~al.
\newblock {OpenAI} o1 system card.
\newblock \emph{arXiv preprint arXiv:2412.16720}, 2024.

\bibitem[Kang et~al.(2025)Kang, Zhao, and Song]{kang2025scalable}
Kang, Z., Zhao, X., and Song, D.
\newblock Scalable best-of-{$N$} selection for large language models via self-certainty.
\newblock \emph{arXiv preprint arXiv:2502.18581}, 2025.

\bibitem[Lee \& Hockenmaier(2025)Lee and Hockenmaier]{lee2025evaluating}
Lee, J. and Hockenmaier, J.
\newblock Evaluating step-by-step reasoning traces: A survey.
\newblock \emph{arXiv preprint arXiv:2502.12289}, 2025.

\bibitem[Lee et~al.(2025)Lee, Yang, Heo, Han, Kim, Yang, and Yoo]{lee2025tokensupervised}
Lee, J.~H., Yang, J.~Y., Heo, B., Han, D., Kim, K., Yang, E., and Yoo, K.~M.
\newblock Token-supervised value models for enhancing mathematical problem-solving capabilities of large language models.
\newblock In \emph{International Conference on Learning Representations}, 2025.

\bibitem[Lewis et~al.(2020)Lewis, Perez, Piktus, Petroni, Karpukhin, Goyal, K{\"u}ttler, Lewis, Yih, Rockt{\"a}schel, et~al.]{lewis2020retrieval}
Lewis, P., Perez, E., Piktus, A., Petroni, F., Karpukhin, V., Goyal, N., K{\"u}ttler, H., Lewis, M., Yih, W.-t., Rockt{\"a}schel, T., et~al.
\newblock Retrieval-augmented generation for knowledge-intensive {NAIME2024LP} tasks.
\newblock In \emph{Advances in Neural Information Processing Systems}, pp.\  9459--9474, 2020.

\bibitem[Li et~al.(2025)Li, Zhou, Muhtar, Yin, Yan, Shen, Liang, Vosoughi, and Liu]{li2025diffusion}
Li, P., Zhou, Y., Muhtar, D., Yin, L., Yan, S., Shen, L., Liang, Y., Vosoughi, S., and Liu, S.
\newblock Diffusion language models know the answer before decoding.
\newblock \emph{arXiv preprint arXiv:2508.19982}, 2025.

\bibitem[Lifshitz et~al.(2025)Lifshitz, McIlraith, and Du]{lifshitz2025multi}
Lifshitz, S., McIlraith, S.~A., and Du, Y.
\newblock Multi-agent verification: Scaling test-time compute with multiple verifiers.
\newblock \emph{arXiv preprint arXiv:2502.20379}, 2025.

\bibitem[Lightman et~al.(2024)Lightman, Kosaraju, Burda, Edwards, Baker, Lee, Leike, Schulman, Sutskever, and Cobbe]{lightman2023let}
Lightman, H., Kosaraju, V., Burda, Y., Edwards, H., Baker, B., Lee, T., Leike, J., Schulman, J., Sutskever, I., and Cobbe, K.
\newblock Let's verify step by step.
\newblock In \emph{International Conference on Learning Representations}, 2024.

\bibitem[Lin et~al.(2022)Lin, Hilton, and Evans]{lin2022teaching}
Lin, S., Hilton, J., and Evans, O.
\newblock Teaching models to express their uncertainty in words.
\newblock \emph{arXiv preprint arXiv:2205.14334}, 2022.

\bibitem[Liu et~al.(2024)Liu, Chen, Shoeybi, Catanzaro, and Ping]{liu2024acemath}
Liu, Z., Chen, Y., Shoeybi, M., Catanzaro, B., and Ping, W.
\newblock {AceMath}: Advancing frontier math reasoning with post-training and reward modeling.
\newblock \emph{arXiv preprint arXiv:2412.15084}, 2024.

\bibitem[Lu et~al.(2024)Lu, Dou, Wang, Cao, Dai, Feng, and Guo]{lu2024autopsv}
Lu, J., Dou, Z., Wang, H., Cao, Z., Dai, J., Feng, Y., and Guo, Z.
\newblock {AutoPSV}: Automated process-supervised verifier.
\newblock In \emph{Advances in Neural Information Processing Systems}, pp.\  79935--79962, 2024.

\bibitem[Luo et~al.(2024)Luo, Liu, Liu, Phatale, Guo, Lara, Li, Shu, Zhu, Meng, et~al.]{luo2024improve}
Luo, L., Liu, Y., Liu, R., Phatale, S., Guo, M., Lara, H., Li, Y., Shu, L., Zhu, Y., Meng, L., et~al.
\newblock Improve mathematical reasoning in language models by automated process supervision.
\newblock \emph{arXiv preprint arXiv:2406.06592}, 2024.

\bibitem[{MAA}(2024)]{AIME2024}
{MAA}.
\newblock 2024 {American Invitational Mathematics Examination (AIME)}.
\newblock Competition Problems and Solutions, 2024.
\newblock URL \url{https://www.maa.org/math-competitions/aime}.

\bibitem[{MAA}(2025)]{AIME2025}
{MAA}.
\newblock 2025 {American Invitational Mathematics Examination (AIME)}.
\newblock Competition Problems and Solutions, 2025.
\newblock URL \url{https://www.maa.org/math-competitions/aime}.

\bibitem[Muennighoff et~al.(2025)Muennighoff, Yang, Shi, Li, Fei-Fei, Hajishirzi, Zettlemoyer, Liang, Cand{\`e}s, and Hashimoto]{muennighoff2025s1}
Muennighoff, N., Yang, Z., Shi, W., Li, X.~L., Fei-Fei, L., Hajishirzi, H., Zettlemoyer, L., Liang, P., Cand{\`e}s, E., and Hashimoto, T.
\newblock {s1}: Simple test-time scaling.
\newblock \emph{arXiv preprint arXiv:2501.19393}, 2025.

\bibitem[Ouyang et~al.(2022)Ouyang, Wu, Jiang, Almeida, Wainwright, Mishkin, Zhang, Agarwal, Slama, Ray, et~al.]{ouyang2022training}
Ouyang, L., Wu, J., Jiang, X., Almeida, D., Wainwright, C., Mishkin, P., Zhang, C., Agarwal, S., Slama, K., Ray, A., et~al.
\newblock Training language models to follow instructions with human feedback.
\newblock In \emph{Advances in Neural Information Processing Systems}, pp.\  27730--27744, 2022.

\bibitem[Samragh et~al.(2025)Samragh, Kundu, Harrison, Nishu, Naik, Cho, and Farajtabar]{samragh2025your}
Samragh, M., Kundu, A., Harrison, D., Nishu, K., Naik, D., Cho, M., and Farajtabar, M.
\newblock Your {LLM} knows the future: Uncovering its multi-token prediction potential.
\newblock \emph{arXiv preprint arXiv:2507.11851}, 2025.

\bibitem[Schick et~al.(2023)Schick, Dwivedi-Yu, Dess{\`\i}, Raileanu, Lomeli, Hambro, Zettlemoyer, Cancedda, and Scialom]{schick2023toolformer}
Schick, T., Dwivedi-Yu, J., Dess{\`\i}, R., Raileanu, R., Lomeli, M., Hambro, E., Zettlemoyer, L., Cancedda, N., and Scialom, T.
\newblock Toolformer: Language models can teach themselves to use tools.
\newblock In \emph{Advances in Neural Information Processing Systems}, pp.\  68539--68551, 2023.

\bibitem[Setlur et~al.(2024)Setlur, Nagpal, Fisch, Geng, Eisenstein, Agarwal, Agarwal, Berant, and Kumar]{setlur2024rewarding}
Setlur, A., Nagpal, C., Fisch, A., Geng, X., Eisenstein, J., Agarwal, R., Agarwal, A., Berant, J., and Kumar, A.
\newblock Rewarding progress: Scaling automated process verifiers for {LLM} reasoning.
\newblock \emph{arXiv preprint arXiv:2410.08146}, 2024.

\bibitem[Shao et~al.(2024)Shao, Wang, Zhu, Xu, Song, Bi, Zhang, Zhang, Li, Wu, et~al.]{shao2024deepseekmath}
Shao, Z., Wang, P., Zhu, Q., Xu, R., Song, J., Bi, X., Zhang, H., Zhang, M., Li, Y., Wu, Y., et~al.
\newblock {DeepSeekMath}: Pushing the limits of mathematical reasoning in open language models.
\newblock \emph{arXiv preprint arXiv:2402.03300}, 2024.

\bibitem[Stiennon et~al.(2020)Stiennon, Ouyang, Wu, Ziegler, Lowe, Voss, Radford, Amodei, and Christiano]{stiennon2020learning}
Stiennon, N., Ouyang, L., Wu, J., Ziegler, D., Lowe, R., Voss, C., Radford, A., Amodei, D., and Christiano, P.~F.
\newblock Learning to summarize with human feedback.
\newblock In \emph{Advances in Neural Information Processing Systems}, pp.\  3008--3021, 2020.

\bibitem[Suzgun et~al.(2023)Suzgun, Scales, Sch{\"a}rli, Gehrmann, Tay, Chung, Chowdhery, Le, Chi, Zhou, et~al.]{suzgun2023challenging}
Suzgun, M., Scales, N., Sch{\"a}rli, N., Gehrmann, S., Tay, Y., Chung, H.~W., Chowdhery, A., Le, Q.~V., Chi, E.~H., Zhou, D., et~al.
\newblock Challenging {BIG-Bench} tasks and whether chain-of-thought can solve them.
\newblock In \emph{Findings of the Association for Computational Linguistics}, pp.\  13003--13051, 2023.

\bibitem[Team et~al.(2025)Team, Du, Gao, Xing, Jiang, Chen, Li, Xiao, Du, Liao, et~al.]{team2025kimi}
Team, K., Du, A., Gao, B., Xing, B., Jiang, C., Chen, C., Li, C., Xiao, C., Du, C., Liao, C., et~al.
\newblock Kimi k1.5: Scaling reinforcement learning with {LLMs}.
\newblock \emph{arXiv preprint arXiv:2501.12599}, 2025.

\bibitem[Uesato et~al.(2022)Uesato, Kushman, Kumar, Song, Siegel, Wang, Creswell, Irving, and Higgins]{uesato2022solving}
Uesato, J., Kushman, N., Kumar, R., Song, F., Siegel, N., Wang, L., Creswell, A., Irving, G., and Higgins, I.
\newblock Solving math word problems with process-and outcome-based feedback.
\newblock \emph{arXiv preprint arXiv:2211.14275}, 2022.

\bibitem[Vaswani et~al.(2017)Vaswani, Shazeer, Parmar, Uszkoreit, Jones, Gomez, Kaiser, and Polosukhin]{vaswani2017attention}
Vaswani, A., Shazeer, N., Parmar, N., Uszkoreit, J., Jones, L., Gomez, A.~N., Kaiser, {\L}., and Polosukhin, I.
\newblock Attention is all you need.
\newblock In \emph{Advances in Neural Information Processing Systems}, 2017.

\bibitem[Venktesh et~al.(2025)Venktesh, Rathee, and Anand]{venktesh2025trust}
Venktesh, V., Rathee, M., and Anand, A.
\newblock Trust but verify! {A} survey on verification design for test-time scaling.
\newblock \emph{arXiv preprint arXiv:2508.16665}, 2025.

\bibitem[Wang et~al.(2024)Wang, Xiong, Xie, Zhao, and Zhang]{wang2024interpretable}
Wang, H., Xiong, W., Xie, T., Zhao, H., and Zhang, T.
\newblock Interpretable preferences via multi-objective reward modeling and mixture-of-experts.
\newblock \emph{arXiv preprint arXiv:2406.12845}, 2024.

\bibitem[Wang et~al.(2023{\natexlab{a}})Wang, Li, Shao, Xu, Dai, Li, Chen, Wu, and Sui]{wang2023math}
Wang, P., Li, L., Shao, Z., Xu, R., Dai, D., Li, Y., Chen, D., Wu, Y., and Sui, Z.
\newblock {Math-Shepherd}: Verify and reinforce {LLMs} step-by-step without human annotations.
\newblock \emph{arXiv preprint arXiv:2312.08935}, 2023{\natexlab{a}}.

\bibitem[Wang et~al.(2023{\natexlab{b}})Wang, Wei, Schuurmans, Le, Chi, Narang, Chowdhery, and Zhou]{wang2022self}
Wang, X., Wei, J., Schuurmans, D., Le, Q.~V., Chi, E.~H., Narang, S., Chowdhery, A., and Zhou, D.
\newblock Self-consistency improves chain of thought reasoning in language models.
\newblock In \emph{International Conference on Learning Representations}, 2023{\natexlab{b}}.

\bibitem[Wang et~al.(2025)Wang, Zhang, Huang, Yang, Zhang, Huang, and Wang]{wang2025sampling}
Wang, Y., Zhang, P., Huang, S., Yang, B., Zhang, Z., Huang, F., and Wang, R.
\newblock Sampling-efficient test-time scaling: Self-estimating the best-of-{$N$} sampling in early decoding.
\newblock \emph{arXiv preprint arXiv:2503.01422}, 2025.

\bibitem[Wei et~al.(2022)Wei, Wang, Schuurmans, Bosma, Xia, Chi, Le, and Zhou]{wei2022chain}
Wei, J., Wang, X., Schuurmans, D., Bosma, M., Xia, F., Chi, E.~H., Le, Q.~V., and Zhou, D.
\newblock Chain-of-thought prompting elicits reasoning in large language models.
\newblock In \emph{Advances in Neural Information Processing Systems}, pp.\  24824--24837, 2022.

\bibitem[Wen et~al.(2025)Wen, Su, Zhang, Liu, Liu, Zhang, and Li]{wen2025parathinker}
Wen, H., Su, Y., Zhang, F., Liu, Y., Liu, Y., Zhang, Y.-Q., and Li, Y.
\newblock {ParaThinker}: Native parallel thinking as a new paradigm to scale {LLM} test-time compute.
\newblock \emph{arXiv preprint arXiv:2509.04475}, 2025.

\bibitem[Xiong et~al.(2024)Xiong, Hu, Lu, Li, Fu, He, and Hooi]{xiongcan}
Xiong, M., Hu, Z., Lu, X., Li, Y., Fu, J., He, J., and Hooi, B.
\newblock Can {LLMs} express their uncertainty? {A}n empirical evaluation of confidence elicitation in {LLMs}.
\newblock In \emph{International Conference on Learning Representations}, 2024.

\bibitem[Yang et~al.(2024)Yang, Zhang, Hui, Gao, Yu, Li, Liu, Tu, Zhou, Lin, et~al.]{yang2024qwen25mathtechnicalreportmathematical}
Yang, A., Zhang, B., Hui, B., Gao, B., Yu, B., Li, C., Liu, D., Tu, J., Zhou, J., Lin, J., et~al.
\newblock {Qwen2.5-Math} technical report: Toward mathematical expert model via self-improvement.
\newblock \emph{arXiv preprint arXiv:2409.12122}, 2024.

\bibitem[Yang et~al.(2025{\natexlab{a}})Yang, Li, Yang, Zhang, Hui, Zheng, Yu, Gao, Huang, Lv, et~al.]{yang2025qwen3}
Yang, A., Li, A., Yang, B., Zhang, B., Hui, B., Zheng, B., Yu, B., Gao, C., Huang, C., Lv, C., et~al.
\newblock Qwen3 technical report.
\newblock \emph{arXiv preprint arXiv:2505.09388}, 2025{\natexlab{a}}.

\bibitem[Yang et~al.(2025{\natexlab{b}})Yang, An, Liu, Chen, and Chen]{yang2025multiverse}
Yang, X., An, Y., Liu, H., Chen, T., and Chen, B.
\newblock Multiverse: Your language models secretly decide how to parallelize and merge generation.
\newblock \emph{arXiv preprint arXiv:2506.09991}, 2025{\natexlab{b}}.

\bibitem[Yao et~al.(2023)Yao, Yu, Zhao, Shafran, Griffiths, Cao, and Narasimhan]{yao2023tree}
Yao, S., Yu, D., Zhao, J., Shafran, I., Griffiths, T., Cao, Y., and Narasimhan, K.
\newblock Tree of thoughts: Deliberate problem solving with large language models.
\newblock In \emph{Advances in Neural Information Processing Systems}, pp.\  11809--11822, 2023.

\bibitem[Ye et~al.(2025)Ye, Melo, Kaddar, Blunsom, Staton, and Gal]{ye2025uncertainty}
Ye, Z., Melo, L.~C., Kaddar, Y., Blunsom, P., Staton, S., and Gal, Y.
\newblock Uncertainty-aware step-wise verification with generative reward models.
\newblock \emph{arXiv preprint arXiv:2502.11250}, 2025.

\bibitem[Yu et~al.(2023)Yu, Gao, and Wang]{yu2023ovm}
Yu, F., Gao, A., and Wang, B.
\newblock {OVM}, outcome-supervised value models for planning in mathematical reasoning.
\newblock \emph{arXiv preprint arXiv:2311.09724}, 2023.

\bibitem[Yu et~al.(2024)Yu, Jiang, Shi, Yu, Liu, Zhang, Kwok, Li, Weller, and Liu]{yu2023metamath}
Yu, L., Jiang, W., Shi, H., Yu, J., Liu, Z., Zhang, Y., Kwok, J.~T., Li, Z., Weller, A., and Liu, W.
\newblock {MetaMath}: Bootstrap your own mathematical questions for large language models.
\newblock In \emph{International Conference on Learning Representations}, 2024.

\bibitem[Yu et~al.(2025)Yu, Zhang, Zhu, Yuan, Zuo, Yue, Dai, Fan, Liu, Liu, et~al.]{yu2025dapo}
Yu, Q., Zhang, Z., Zhu, R., Yuan, Y., Zuo, X., Yue, Y., Dai, W., Fan, T., Liu, G., Liu, L., et~al.
\newblock {DAPO}: An open-source {LLM} reinforcement learning system at scale.
\newblock \emph{arXiv preprint arXiv:2503.14476}, 2025.

\bibitem[Zeng et~al.(2025)Zeng, Zhang, Wu, Classen, Chae, Ewer, Lee, Kim, Kang, Kunde, et~al.]{zeng2025versaprm}
Zeng, T., Zhang, S., Wu, S., Classen, C., Chae, D., Ewer, E., Lee, M., Kim, H., Kang, W., Kunde, J., et~al.
\newblock {VersaPRM}: Multi-domain process reward model via synthetic reasoning data.
\newblock \emph{arXiv preprint arXiv:2502.06737}, 2025.

\bibitem[Zhang et~al.(2025{\natexlab{a}})Zhang, Chen, Pan, Zhao, Panda, Li, and He]{zhang2025reasoning}
Zhang, A., Chen, Y., Pan, J., Zhao, C., Panda, A., Li, J., and He, H.
\newblock Reasoning models know when they're right: Probing hidden states for self-verification.
\newblock \emph{arXiv preprint arXiv:2504.05419}, 2025{\natexlab{a}}.

\bibitem[Zhang et~al.(2024{\natexlab{a}})Zhang, Huang, Zhou, Li, and Ouyang]{zhang2024accessing}
Zhang, D., Huang, X., Zhou, D., Li, Y., and Ouyang, W.
\newblock Accessing {GPT-4} level mathematical {Olympiad} solutions via {Monte Carlo} tree self-refine with {LLaMa-3 8B}.
\newblock \emph{arXiv preprint arXiv:2406.07394}, 2024{\natexlab{a}}.

\bibitem[Zhang et~al.(2024{\natexlab{b}})Zhang, Zhoubian, Hu, Yue, Dong, and Tang]{zhang2024rest}
Zhang, D., Zhoubian, S., Hu, Z., Yue, Y., Dong, Y., and Tang, J.
\newblock {ReST-MCTS*}: {LLM} self-training via process reward guided tree search.
\newblock In \emph{Advances in Neural Information Processing Systems}, pp.\  64735--64772, 2024{\natexlab{b}}.

\bibitem[Zhang et~al.(2024{\natexlab{c}})Zhang, Hosseini, Bansal, Kazemi, Kumar, and Agarwal]{zhang2024generative}
Zhang, L., Hosseini, A., Bansal, H., Kazemi, M., Kumar, A., and Agarwal, R.
\newblock Generative verifiers: Reward modeling as next-token prediction.
\newblock \emph{arXiv preprint arXiv:2408.15240}, 2024{\natexlab{c}}.

\bibitem[Zhang et~al.(2025{\natexlab{b}})Zhang, Emma, En, and Dong]{zhang2025rvllm}
Zhang, Y., Emma, S.~Y., En, A. L.~J., and Dong, J.~S.
\newblock {RvLLM}: {LLM} runtime verification with domain knowledge.
\newblock \emph{arXiv preprint arXiv:2505.18585}, 2025{\natexlab{b}}.

\bibitem[Zhang et~al.(2025{\natexlab{c}})Zhang, Zheng, Wu, Zhang, Lin, Yu, Liu, Zhou, and Lin]{prmlessons}
Zhang, Z., Zheng, C., Wu, Y., Zhang, B., Lin, R., Yu, B., Liu, D., Zhou, J., and Lin, J.
\newblock The lessons of developing process reward models in mathematical reasoning.
\newblock \emph{arXiv preprint arXiv:2501.07301}, 2025{\natexlab{c}}.

\bibitem[Zhao et~al.(2025)Zhao, Aggarwal, Saha, Celikyilmaz, Weston, and Kulikov]{zhao2025majority}
Zhao, W., Aggarwal, P., Saha, S., Celikyilmaz, A., Weston, J., and Kulikov, I.
\newblock The majority is not always right: {RL} training for solution aggregation.
\newblock \emph{arXiv preprint arXiv:2509.06870}, 2025.

\bibitem[Zheng et~al.(2025{\natexlab{a}})Zheng, Zhang, Zhang, Lin, Lu, Yu, Liu, Zhou, and Lin]{zheng2024processbench}
Zheng, C., Zhang, Z., Zhang, B., Lin, R., Lu, K., Yu, B., Liu, D., Zhou, J., and Lin, J.
\newblock {P}rocess{B}ench: Identifying process errors in mathematical reasoning.
\newblock In \emph{Annual Meeting of the Association for Computational Linguistics}, pp.\  1009--1024, 2025{\natexlab{a}}.

\bibitem[Zheng et~al.(2023)Zheng, Chiang, Sheng, Zhuang, Wu, Zhuang, Lin, Li, Li, Xing, et~al.]{zheng2023judging}
Zheng, L., Chiang, W.-L., Sheng, Y., Zhuang, S., Wu, Z., Zhuang, Y., Lin, Z., Li, Z., Li, D., Xing, E., et~al.
\newblock Judging {LLM-as-a-Judge} with {MT-Bench} and {Chatbot Arena}.
\newblock In \emph{Advances in Neural Information Processing Systems}, pp.\  46595--46623, 2023.

\bibitem[Zheng et~al.(2025{\natexlab{b}})Zheng, Zhang, Yu, Wang, Dai, Liu, Bao, Huang, Huang, and Yu]{zheng2025parallel}
Zheng, T., Zhang, H., Yu, W., Wang, X., Dai, R., Liu, R., Bao, H., Huang, C., Huang, H., and Yu, D.
\newblock {Parallel-R1}: Towards parallel thinking via reinforcement learning.
\newblock \emph{arXiv preprint arXiv:2509.07980}, 2025{\natexlab{b}}.

\bibitem[Zheng et~al.(2024)Zheng, Zhang, Zhang, Ye, Luo, Feng, and Ma]{zheng2024llamafactory}
Zheng, Y., Zhang, R., Zhang, J., Ye, Y., Luo, Z., Feng, Z., and Ma, Y.
\newblock {LlamaFactory}: Unified efficient fine-tuning of 100+ language models.
\newblock \emph{arXiv preprint arXiv:2403.13372}, 2024.

\bibitem[Zhong et~al.(2025)Zhong, Li, Xu, Wen, Li, and Xu]{zhong2025solvedetectverifyinferencetimescalingflexible}
Zhong, J., Li, Z., Xu, Z., Wen, X., Li, K., and Xu, Q.
\newblock {Solve-Detect-Verify}: Inference-time scaling with flexible generative verifier.
\newblock \emph{arXiv preprint arXiv:2505.11966}, 2025.

\end{thebibliography}

\bibliographystyle{icml2026}

\newpage
\appendix
\onecolumn

\section{Algorithm Descriptions}\label{appendix:algorithm}
Algorithm~\ref{alg:otv-train-parallel} summarizes the parallelized OTV training procedure: for each sampled reasoning trace, we 1) cache the KV states produced by the frozen reasoner, 2) construct token-level pseudo-confidence targets, and 3) run a single parallel verification pass by inserting the truth token \texttt{[ToT]} at all probe positions. Algorithm~\ref{alg:otv-infer} describes the OTV inference procedure: a single verification token \texttt{[ToT]} probes any cached prefix and returns a token-level correctness estimate.
We use light purple shading to denote reasoning-mode steps (\ie, trace generation and KV caching) and light yellow shading to denote verification-mode steps (\ie, \texttt{[ToT]} probing).

\begin{algorithm}[h]
\setlength{\fboxsep}{4pt}
\caption{Parallelized OTV training}
\label{alg:otv-train-parallel}
\begin{algorithmic}[1]
\REQUIRE Dataset $\mathcal{D}$
    \STATE \algshade{reasonbg}{\textbf{Trace sampling + KV caching.} For each $(\bm q,a)\in \mathcal{D}$, sample a reasoning trace $\bm x_{1:T}\sim \mathrm{LLM}(\bm q)$ and cache per-layer KV states $\mathcal{C}_{T}$, where $\mathcal{C}_{t}=\{\K^{(l)}_{t},\V^{(l)}_{t}\}_{l=1}^{L}$}
    \STATE \algshade{reasonbg}{\textbf{Pseudo-confidence targets.} Compute token-level targets $\{c_t\}_{t=1}^{T}$ from the trace-level outcome label $y\in\{0,1\}$, where $y$ is derived by comparing the model's final answer to the ground-truth $a$}
    \STATE \algshade{verifybg}{\textbf{Single-pass parallel probing.} Construct a probe sequence $\texttt{[ToT]}_{1:T+1}$ and apply a triangular probe mask $\mathbf M$ so probe $t+1$ attends only to the cached prefix $1:t$; compute all token-level predictions $\{\hat c_t\}$ in one forward pass}
    \STATE \algshade{verifybg}{\textbf{Optimization.} Minimize the MSE over response tokens and update LoRA and regression head parameters}
\end{algorithmic}
\end{algorithm}
This implementation is equivalent to issuing $T+1$ separate one-token verification queries, but executes them in a single forward pass while reusing the same cached prefix states.

\begin{algorithm}[h]
\setlength{\fboxsep}{4pt}
\caption{OTV inference (one-token probing)}
\label{alg:otv-infer}
\begin{algorithmic}[1]
\REQUIRE Prompt/question $\bm q$, partial trace prefix $\bm x_{1:{t}}$, cached KV states $\mathcal{C}_{t}$
\STATE \algshade{reasonbg}{\textbf{Prefix generation + caching.} Generate (or reuse) a partial trace $\bm x_{1:t}$ and KV cache $\mathcal{C}_{t}$}
\STATE \algshade{verifybg}{\textbf{One-token verification query.} Insert a single \texttt{[ToT]} at position $t+1$, reuse $\mathcal{C}_{t}$, and run a forward pass under the LoRA-gated verifier to obtain $\hat{c}_t \in [0,1]$}  
\STATE \textbf{Trace-level scoring (optional).} Aggregate $\{\hat{c}_t\}$ (\eg, over the final segment) into a single score for ranking and pruning
\end{algorithmic}
\end{algorithm}

\section{Additional Experimental Details}

\subsection{Alternative Pseudo-Confidence Labeling Rules}\label{app:labeling_schemes}
OTV uses outcome supervision $y\in\{0,1\}$ to define dense token-level targets $c_t\in[0,1]$. Besides the default \emph{linear ramp}, we evaluate four alternatives.

\begin{itemize}[leftmargin=20pt]
    \item \emph{Constant label} uses a hindsight-style target that assumes the model ``knows'' the final outcome  across the trace:
    \begin{equation}
        c_t = y.
    \end{equation}

    \item \emph{Sigmoid ramp} provides a smooth, parameterized alternative that can interpolate between gradual and abrupt transitions:
    \begin{equation}
        c_t =\mathrm{sigmoid}\left((2y-1)\alpha\frac{t}{T}\right),
    \end{equation}
    where $\alpha >0$ controls the shape: smaller $\alpha$ yields an almost-linear ramp, while larger $\alpha$ produces a sharper transition toward the final label, approaching a step-like change (\ie, constant label). We set $\alpha = 4$ by default.

    \item \emph{Noise-perturbed ramp}. To model local fluctuations in uncertainty while preserving a global monotonic trend, we perturb the linear baseline with additive sinusoid and Gaussian noise:
    \begin{equation}
        c_t = \text{clip}\left( \mathrm{linear}(t) + \beta \sin\left(\frac{2\pi \omega t}{T}\right) + \epsilon, \; 0, 1 \right),
    \end{equation}
    where $\beta$ is the amplitude, $\omega$ is the frequency, and $\epsilon \sim \mathcal{N}(0, \sigma^2)$. We set $\beta = 0.1$ and $\omega = 3$ by default.

    \item \emph{Stepwise ramp} (by reasoning steps). Some traces are naturally organized into discrete reasoning steps. We segment the trace into $\Gamma$ steps using ``\textbackslash n\textbackslash n'' delimiters and let $\gamma(t)$ denote the index of the reasoning step to which token $t$ belongs. We then define
    \begin{equation}
        c_t = 0.5 + (y - 0.5) \frac{\gamma(t)}{\Gamma},
    \end{equation}
    giving rise to a piecewise-constant trajectory that increases only when a new step begins.
\end{itemize}

\subsection{Evaluation on Pretrained Base Models}\label{app:base_model}
\begin{wrapfigure}{r}{0.25\textwidth}
\begin{center}
\vspace{-0.6in}
\includegraphics[width=0.25\textwidth]{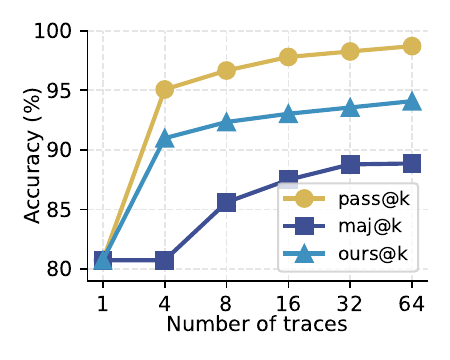}
\vspace{-0.3in}
\caption{Evaluation on GSM8K using \textit{Qwen3-4B-Base}.}
\vspace{-0.2in}
\label{fig:gsm8k}
\end{center}
\end{wrapfigure}
To further probe whether the observed gains are specific to post-trained reasoning models, we additionally evaluate OTV on a pretrained base model, \textit{Qwen3-4B-Base}. In this setting, we train on  MetaMathQA~\citep{yu2023metamath} and evaluate on GSM8K~\citep{cobbe2021training} to assess gains in raw mathematical reasoning capabilities. As shown in Figure~\ref{fig:gsm8k}, OTV substantially improves over Maj@$k$ and narrows the gap toward Pass@$k$, indicating that the verification signal remains effective even without instruction tuning. 

\begin{table*}[!t]
\centering
\renewcommand{\arraystretch}{1.0}
\caption{Weighted majority-voting accuracy on GSM8K using \textit{MetaMath-LLaMA-7B} and \textit{MetaMath-Mistral-7B} as backbone models. Results are reported as mean $\pm$ standard deviation over $64$ runs.}
\vspace{-0.1cm}
\begin{small}
\begin{sc}
\resizebox{\textwidth}{!}{
\begin{tabular*}{\textwidth}{@{\extracolsep{\fill}}clcccccc}
\toprule
 & $N$ & 1 & 4 & 8 & 16 & 32 & 64   \\
\midrule 
\multirow{5}{*}{\rotatebox{90}{\textit{LLaMA-7B}}} & Pass@$k$ & 61.87\small{$\pm$0.74}
& 75.01\small{$\pm$0.79} 
& 81.07\small{$\pm$0.49}
& 85.36\small{$\pm$0.48} 
& 88.68\small{$\pm$0.35}
& 91.27\small{$\pm$0.38} 
\\
\cmidrule(lr){2-8}
& Maj@$k$ & 61.87\small{$\pm$0.74}
& 67.29\small{$\pm$0.88} 
& 69.57\small{$\pm$0.43}
& 70.53\small{$\pm$0.44} 
& 71.47\small{$\pm$0.30}
& 71.79\small{$\pm$0.30}
\\
& DeepConf & 61.87\small{$\pm$0.74} & 68.64\small{$\pm$0.80} 
& 70.63\small{$\pm$0.60} 
& 71.76\small{$\pm$0.58}
& 72.06\small{$\pm$0.20} 
& 72.32\small{$\pm$0.21}
\\
& OTV ($r=2$) & 61.87\small{$\pm$0.74} & 
\textbf{71.13}\small{$\pm$0.74} & 
\textbf{73.50}\small{$\pm$0.47} & 
\textbf{74.55}\small{$\pm$0.52} & 
\textbf{74.80}\small{$\pm$0.37} & 
\textbf{75.16}\small{$\pm$0.22} \\
& OTV ($r=16$) & 61.87\small{$\pm$0.74}
& \underline{70.72}\small{$\pm$0.81} 
& \underline{72.82}\small{$\pm$0.35}
& \underline{73.80}\small{$\pm$0.32} 
& \underline{74.37}\small{$\pm$0.28}
& \underline{74.83}\small{$\pm$0.29} 
\\
\midrule
\multirow{5}{*}{\rotatebox{90}{\textit{Mistral-7B}}} & Pass@$k$ & 66.02\small{$\pm$1.18}
& 83.16\small{$\pm$0.65} 
& 88.46\small{$\pm$0.50} 
& 91.58\small{$\pm$0.44}
& 93.99\small{$\pm$0.28} 
& 95.87\small{$\pm$0.24}
\\
\cmidrule(lr){2-8}
& Maj@$k$ & 66.02\small{$\pm$1.18}
& 72.60\small{$\pm$0.49} 
& 77.33\small{$\pm$0.30} 
& 79.04\small{$\pm$0.37}
& 80.21\small{$\pm$0.40} 
& 80.56\small{$\pm$0.33}
\\
& DeepConf & 66.02\small{$\pm$1.18}
& 75.63\small{$\pm$0.90} 
& 78.62\small{$\pm$0.83} 
& 79.61\small{$\pm$0.34}
& 80.81\small{$\pm$0.37} 
& 81.31\small{$\pm$0.13}
\\
& OTV ($r=2$) & 66.02\small{$\pm$1.18}
& \textbf{79.52}\small{$\pm$0.73} 
& \underline{82.05}\small{$\pm$0.44} 
& \underline{83.12}\small{$\pm$0.25}
& \underline{84.06}\small{$\pm$0.24} 
& \underline{84.24}\small{$\pm$0.30}\\
& OTV ($r=16$) & 66.02\small{$\pm$1.18}
& \underline{79.51}\small{$\pm$0.57} 
& \textbf{82.27}\small{$\pm$0.30} 
& \textbf{83.54}\small{$\pm$0.42}
& \textbf{84.18}\small{$\pm$0.31} 
& \textbf{84.48}\small{$\pm$0.21}\\
\bottomrule
\end{tabular*}
}
\end{sc}
\end{small}
\label{tab:other}
\end{table*}

\subsection{Evaluation on Other Model Families}
\label{app:other_models}
To assess architectural generality beyond the \textit{Qwen} family, OTV is applied to  \textit{MetaMath-LLaMA-7B}\footnote{\url{https://huggingface.co/meta-math/MetaMath-7B-V1.0}} and  \textit{MetaMath-Mistral-7B}\footnote{\url{https://huggingface.co/meta-math/MetaMath-Mistral-7B}} models on GSM8K under low-resource fine-tuning (using only 10k MetaMathQA subset). As shown in Table~\ref{tab:other},  OTV consistently outperforms majority voting and DeepConf across sample sizes, and increasing LoRA rank typically improves performance.

\subsection{Inference-time Verification Overhead}\label{sec:output-length}
OTV's verification calls depend on the aggregation strategy (illustrated for $N=128$).
\begin{itemize}[leftmargin=15pt]
    \item \textbf{Drop@10} ($8,256$ calls)  prunes the \textit{single} lowest-scoring trace every $10$ generated tokens until only one trace remains. At the first checkpoint ($t=10$), all $128$ traces are verified and the worst is discarded; at the second ($t=20$), the remaining $127$ traces are verified and one is dropped, and so on. The total number of verification calls is the arithmetic sum:
    \begin{equation}
        \#\text{Calls} = \sum_{n=1}^{N} n = \frac{N(N+1)}{2} = \frac{128 \times 129}{2} = 8,256.
    \end{equation}

    \item \textbf{Stop@600} ($128$ calls) as a ``verify-once'' strategy,  triggers the verifier only when a trace reaches the $600$-th token (or earlier if generation terminates). Consequently, each of the $N$ candidate traces is verified exactly once:
    \begin{equation}
        \#\text{Calls} = N = 128.
    \end{equation}

    \item \textbf{Halve@300} ($254$ calls) halves the candidate pool every $300$ tokens. That is, we verify $128$ traces at $t=300$, $64$ at $t=600$, $32$ at $t=900$, and so forth. The total number of calls follows a geometric series with the upper bound:
    \begin{equation}
        \#\text{Calls} \approx \sum_{k=0}^{
        \left\lceil\log_2(N -1)\right\rceil} \left\lceil\frac{N}{2^k}\right\rceil = 128 + 64 + 32 + \dots + 4 + 2 = 254.  
    \end{equation}
\end{itemize}

\section{Theoretical Analysis}\label{app:theory}

This section formalizes what OTV learns under different pseudo-confidence labeling rules by viewing verification as square-loss regression on a trace prefix $\bm s_{t} = [\bm q,\bm x_{1:t}]$, equivalently, its KV cache.

\subsection{Setups}
Let the base reasoner induce an autoregressive policy $\pi_{\bm\theta}(\bm x_t \mid \bm q, \bm x_{1:t})$ and sample traces $\bm x_{1:T} = [\bm x_1,\ldots,\bm x_T]$ with terminal correctness $y\in\{0,1\}$. For any pseudo-confidence labeling rule $c_t = c(t, T, y)$, OTV minimizes the per-token MSE:
\begin{equation}
\ell(\bm \phi)=\mathbb{E}\!\left[\left(f_{\bm \phi}(\bm s_{t})-c(t,T,y)\right)^2\right],
\end{equation}
where $f_{\bm \phi}(\cdot)$ denotes the verifier, parameterized by vector $\bm \phi$ and the expectation is taken over trajectories  $\bm x_{1:T}\sim \pi_{\bm\theta}(\cdot\mid \bm q)$ sampled from the base reasoner.

\subsection{Optimal predictor under MSE}
\begin{proposition}[Risk Minimizer under MSE]\label{prop:mse_opt}
For any fixed $t$ and any state $\bm s_{t}$, among all measurable functions $f_
{\bm \phi}(\cdot)$,
the minimizer of the conditional risk
$\mathbb{E}\!\left[(f_
{\bm \phi}(\bm s_{t})-c(t,T,y))^2 \mid \bm s_{t}\right]$
is the conditional expectation
\begin{equation}\label{eq:conditional_expectation_opt}
f_{\bm \phi^\star}(\bm s_{t})
\;=\;
\mathbb{E}\!\left[c(t,T,y)\mid \bm s_{t}\right].
\end{equation}
\end{proposition}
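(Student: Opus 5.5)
The plan is the standard bias--variance decomposition of the conditional square loss. Fix $t$ and a state $\bm s_t$. Write $C = c(t,T,y)$ for the random target; here $T$ and $y$ are random given $\bm s_t$, because the continuation $\bm x_{t+1:T}$ is still to be sampled from $\pi_{\bm\theta}$. Set $\mu(\bm s_t) = \mathbb{E}[C \mid \bm s_t]$. This is well defined and finite, since every labeling rule in Sec.~\ref{sec_3} and Appendix~\ref{app:labeling_schemes} takes values in $[0,1]$, so $C$ is bounded. For any measurable $f$, the value $a = f(\bm s_t)$ is a constant conditional on $\bm s_t$.

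The key step is to add and subtract $\mu(\bm s_t)$ inside the square and expand:
\begin{equation*}
\mathbb{E}\!\left[(a - C)^2 \mid \bm s_t\right] = \big(a - \mu(\bm s_t)\big)^2 + 2\big(a-\mu(\bm s_t)\big)\,\mathbb{E}\!\left[\mu(\bm s_t) - C \mid \bm s_t\right] + \mathbb{E}\!\left[(\mu(\bm s_t)-C)^2 \mid \bm s_t\right].
\end{equation*}
The cross term vanishes because $\mathbb{E}[C\mid \bm s_t] = \mu(\bm s_t)$. This is the "pull out what is known" property of conditional expectation, which we may use because $a$ and $\mu(\bm s_t)$ are $\bm s_t$-measurable. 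The last term is $\mathrm{Var}(C\mid \bm s_t)$ and does not depend on $f$. Hence the conditional risk equals $(f(\bm s_t)-\mu(\bm s_t))^2 + \mathrm{Var}(C\mid\bm s_t)$. It is minimized exactly when $f(\bm s_t) = \mu(\bm s_t)$, and the minimum value is the conditional variance. This gives Eq.~(\ref{eq:conditional_expectation_opt}).

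Finally, I would note that $\mu(\cdot)$ is itself a measurable function of the state, being a version of the conditional expectation. Choosing it pointwise therefore also minimizes the unconditional risk $\ell(\bm\phi)$, by the tower property, and the minimizer is unique up to null sets.

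The only mild obstacle is measure-theoretic bookkeeping. One must make sure the conditional expectation exists and that integrability holds so the cross term can be split; boundedness of $c$ in $[0,1]$ settles both. Whether the parametric class $f_{\bm\phi}$ can actually realize $\mu$ is outside the statement, since the proposition ranges over all measurable functions.
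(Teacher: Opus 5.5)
Your proposal is correct and uses the same bias--variance decomposition as the paper. In both, you fix $\bm s_t$, treat $a=f(\bm s_t)$ as a constant, and observe that $\mathbb{E}[(a-c(t,T,y))^2\mid \bm s_t]=(a-\mathbb{E}[c(t,T,y)\mid\bm s_t])^2+\mathrm{Var}(c(t,T,y)\mid \bm s_t)$ is minimized at the conditional mean. You additionally spell out the cross-term cancellation, the integrability guaranteed by labels in $[0,1]$, and the extension to the unconditional risk via the tower property, all of which the paper leaves implicit.
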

\begin{proof}
Fix $\bm s_{t}$ and consider any scalar $a$. By the bias-variance decomposition,
\begin{equation}
\mathbb{E}\!\left[(a-c(t,T,y))^2\mid \bm s_{t}\right]
=
\left(a-\mathbb{E}[c(t,T,y)\mid \bm s_{t}]\right)^2 + \mathrm{Var}\left(c(t,T,y)\mid \bm s_{t}\right),
\end{equation}
which is minimized at $a=\mathbb{E}[c(t,T,y)\mid \bm s_{t}]$.
\end{proof}
This shows that the choice of pseudo-confidence labeling rule $c_t$ determines the conditional statistic that OTV would like to approximate during training.

\subsection{Constant Label Recovers a Monte-Carlo Correctness Value}
For the \textit{constant label} rule $c_\mathrm{const}(t,T,y) = y$, the optimal predictor equals
\begin{equation}
V_{\mathrm{MC}}(\bm s_{t})
\;:=\;
\mathbb{P}(y=1\mid \bm s_{t})
\;=\;
\mathbb{E}[y\mid \bm s_{t}],
\end{equation}
\ie, the probability that a continuation sampled from $\pi_{\bm\theta}(\cdot\mid \bm s_{t})$ yields a correct final answer.

\subsection{Linear Ramp Induces an Inverse-Length Preference}
Consider the \textit{linear ramp} $c_{\mathrm{lin}}(t,T,y)=0.5+(y-0.5)\frac{t}{T}$, which satisfies $c_{\mathrm{lin}}(t,T,y)\in[0,1]$ for $t\le T$, and $c_{\mathrm{lin}}(T,T,y)=y$. Define the expected inverse lengths of correct and incorrect completions from $\bm s_{t}$ as
\begin{equation}
\mu_{+}(\bm s_{t})
:=\mathbb{E}\!\left[\frac{1}{T}\,\middle|\, \bm s_{t},\, y=1\right],
\qquad
\mu_{-}(\bm s_{t})
:=\mathbb{E}\!\left[\frac{1}{T}\,\middle|\, \bm s_{t},\, y=0\right].
\end{equation}

\begin{proposition}[]\label{prop:linear_affine}
The MSE-optimal predictor under $c_\mathrm{lin}$ is
\begin{equation}\label{eq:linear_opt_closed_form}
f_{\bm \phi^\star}(\bm s_{t})
=
\underbrace{0.5 - \frac{t}{2}\mu_{-}(\bm s_{t})}_{\mathrm{baseline}}
\;+\;
V_{\mathrm{MC}}(\bm s_{t})\cdot
\underbrace{\frac{t}{2}\big(\mu_{+}(\bm s_{t})+\mu_{-}(\bm s_{t})\big)}_{\mathrm{gain}>0},
\end{equation}
where $V_{\mathrm{MC}}(\bm s_{t})
\;=\;
\mathbb{E}[y\mid \bm s_{t}]$. In particular, for fixed $\bm s_{t}$ and $t$, $f_{\bm \phi^\star}(\bm s_{t})$ is strictly increasing in
$V_{\mathrm{MC}}(\bm s_{t})$.
\end{proposition}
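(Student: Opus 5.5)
By Proposition~\ref{prop:mse_opt}, the MSE-optimal predictor is the conditional expectation $f_{\bm\phi^\star}(\bm s_t)=\mathbb{E}[c_{\mathrm{lin}}(t,T,y)\mid \bm s_t]$. The plan is therefore to compute this expectation in closed form. Since $t$ is fixed given $\bm s_t$, the only randomness is in the pair $(T,y)$.

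First, I expand the label as $c_{\mathrm{lin}} = 0.5 + t\,(y-0.5)/T$. By linearity,
\[
f_{\bm\phi^\star}(\bm s_t) = 0.5 + t\,\mathbb{E}\!\left[\frac{y-0.5}{T}\,\middle|\,\bm s_t\right].
\]
Next, I condition on $y$ using the law of total expectation. Writing $p=V_{\mathrm{MC}}(\bm s_t)=\mathbb{P}(y=1\mid\bm s_t)$, the expectation splits into two branches:
\[
\mathbb{E}\!\left[\frac{y-0.5}{T}\,\middle|\,\bm s_t\right] = p\cdot 0.5\,\mu_+(\bm s_t) + (1-p)\cdot(-0.5)\,\mu_-(\bm s_t).
\]
The first term uses $\mathbb{E}[1/T\mid \bm s_t,y=1]=\mu_+$, and the second uses $\mathbb{E}[1/T\mid \bm s_t,y=0]=\mu_-$.

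Multiplying by $t$ and collecting the terms in $p$ gives
\[
0.5 - \tfrac{t}{2}\mu_- + p\,\tfrac{t}{2}(\mu_+ + \mu_-),
\]
which is exactly Eq.~(\ref{eq:linear_opt_closed_form}).

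For strict monotonicity, I argue that the coefficient of $p$ is strictly positive. The trace length satisfies $t\le T<\infty$, so $1/T>0$ almost surely. Hence each of $\mu_\pm$ is positive whenever it is defined, and $t\ge 1$.

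The main obstacle is a degenerate case. When $p\in\{0,1\}$, one of the conditional expectations $\mu_\pm$ is conditioned on a null event and is therefore undefined. I would handle this in two steps:

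1. Adopt the convention that $\mu_\pm$ may be any positive value on null events. The corresponding term is multiplied by zero, so it never affects $f_{\bm\phi^\star}$.
2. Interpret the monotonicity claim as holding with respect to the formula, that is, treating $\mu_\pm$ as fixed given $\bm s_t$ while $V_{\mathrm{MC}}$ varies.

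Beyond this, the proof needs only boundedness, $0<1/T\le 1/t$, to guarantee that all the expectations above exist.
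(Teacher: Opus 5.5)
Your proposal is correct and follows the paper's proof essentially step for step: invoke Proposition~\ref{prop:mse_opt}, expand the linear ramp, condition on $y$ to write the result in terms of $V_{\mathrm{MC}}$, $\mu_{+}$ and $\mu_{-}$, and conclude from positivity of the coefficient $\frac{t}{2}(\mu_{+}+\mu_{-})$. Your remarks on null conditioning events, $t\ge 1$, and the bound $0<1/T\le 1/t$ make explicit what the paper leaves implicit, and your reading of monotonicity as a statement about the formula with $\mu_{\pm}$ held fixed matches how the paper argues it.
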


\begin{proof}
By Proposition~\ref{prop:mse_opt}
\begin{equation}
f_{\bm\phi^\star}(\bm s_{t})
=
\mathbb{E}\!\left[0.5 + (y-0.5)\frac{t}{T}\,\middle|\, \bm s_{t}\right]
=
0.5 + t\cdot \mathbb{E}\!\left[\frac{y-0.5}{T}\,\middle|\, \bm s_{t}\right].
\label{eq:lin_step1}
\end{equation}

Conditioning on $y\in\{0,1\}$ yields
\begin{align}
\mathbb{E}\!\left[\frac{y-0.5}{T}\,\middle|\, \bm s_{t}\right]
&=
\mathbb{P}(y=1\mid \bm s_{t})\cdot \mathbb{E}\!\left[\frac{0.5}{T}\,\middle|\, \bm s_{t},y=1\right]
+
\mathbb{P}(y=0\mid \bm s_{t})\cdot \mathbb{E}\!\left[\frac{-0.5}{T}\,\middle|\, \bm s_{t},y=0\right]\nonumber\\
&=
\frac{1}{2}V_{\mathrm{MC}}(\bm s_{t})\mu_{+}(\bm s_{t})
-\frac{1}{2}(1-V_{\mathrm{MC}}(\bm s_{t}))\mu_{-}(\bm s_{t}), 
\label{eq:lin_step2}
\end{align}
and substituting into Eq.~(\ref{eq:lin_step1}) gives
Eq.~(\ref{eq:linear_opt_closed_form}). Since $\mu_{+},\mu_{-}>0$, the coefficient on $V_{\mathrm{MC}}$ equals $\frac{t}{2}(\mu_{+}+\mu_{-})>0$, implying strict monotonicity.
\end{proof}

Eq.~(\ref{eq:linear_opt_closed_form}) shows that, beyond its monotonic dependence on $V_{\mathrm{MC}}$, the score is modulated by
$\mu_{+}$ and $\mu_{-}$; holding $V_{\mathrm{MC}}$ fixed, larger
$\mu_{+}$ (\ie, shorter correct completions in expectation) increases
$f_{\bm \phi^\star}$, inducing a preference on shorter correct traces.

\section{Future Work}\label{app:future}

Building on these observations, several directions appear promising. The first is to tighten model-verifier co-adaptation. Rather than treating the verifier as a fixed add-on, we may explore joint or continual training where the verifier tracks changes in the base model and the evolving decoding distribution. This naturally connects \textit{model-centric} improvements (\eg, stronger reasoning backbones, better calibration, and decoding-time optimization) with \textit{data-centric} improvements: OTV's scores can be used to curate higher-quality training traces, perform hard-negative mining, and support active learning by prioritizing ``high-uncertainty'' cases for annotation or additional compute.

Second, improving token-level pseudo-confidence labeling is a key opportunity.  The current approach derives dense targets from final outcomes, but richer supervision could substantially improve performance. One direction is uncertainty-aware pseudo-labels that represent partial progress, reversible mistakes, or late-step slips rather than enforcing uniformly monotonic confidence trajectories. Another direction is hybrid bootstrapping for token-level supervision: augment outcome labels with auxiliary signals such as self-consistency/ensemble disagreement, step-boundary priors (\eg, transition markers between reasoning steps), and selective distillation from stronger but costlier process verifiers on a small subset. This can be implemented iteratively, where each improved verifier produces higher-fidelity pseudo-labels for subsequent training rounds, thereby reducing systematic bias introduced by weak initial targets. 

Third, it is desirable to extend the verifier output space and its role in decision-making, with the goal of unlocking broader applications. For example, a selective prediction formulation, \eg, moving from binary confidence to a ternary ``correct/incorrect/unknown'' signal, would allow for abstention and risk-controlled routing when the model is uncertain. Finally, it would be valuable to test OTV beyond math benchmarks, including code reasoning and tool-augmented tasks, and to study how verifier-guided compute allocation interacts with long-context settings (\eg, multi-step planning and multi-agent/ensemble decoding).

\section{Sensitivity to Solution Perturbations}
To test robustness to superficial edits vs. semantic errors, we construct controlled perturbations of the same solution text (see Table~\ref{tab:modification}). Semantics-preserving edits (\eg, shortening/lengthening, formatting changes, or substituting the ground-truth answer) are contrasted with logic-breaking edits (\eg, changing numbers, altering logic, or corrupting the final step). Empirically, we observe that OTV’s confidence reliably increases for correct variants and drops for erroneous variants, suggesting it is driven by internal reasoning consistency rather than surface form.

\begin{table}[h]
\centering
\caption{Representative perturbed solution variants used in the sensitivity analysis. Each row applies a controlled edit to the same base solution to evaluate how the verifier’s confidence responds to semantics-preserving vs. logic-breaking modifications.}
\begin{tabularx}{\linewidth}{@{}l L@{}}
\toprule
\textbf{Type} & \textbf{Text} \\
\midrule
\multicolumn{2}{c}{{Semantics-preserving modifications}} \\
\midrule
\tiny{Ground-truth} &
\tiny{It takes $2/2=1$ bolt of white fiber\textbackslash nSo the total amount of fabric is $2+1=3$ bolts of fabric\textbackslash n\#\#\#\# 3} \\
\addlinespace[2pt]
\tiny{Original} &
\tiny{To solve this problem, we need to determine the total number of bolts required for the robe, which includes both blue and white fiber.\textbackslash n\textbackslash nStep 1: Identify the amount of blue fiber needed.\textbackslash nThe robe takes 2 bolts of blue fiber.\textbackslash n\textbackslash nStep 2: Calculate the amount of white fiber needed.\textbackslash nThe robe takes half as much white fiber as blue fiber. Since it takes 2 bolts of blue fiber, it will take 2 / 2 = 1 bolt of white fiber.\textbackslash n\textbackslash nStep 3: Add the amounts of blue and white fiber to find the total number of bolts.\textbackslash nTotal bolts = Blue fiber + White fiber\textbackslash nTotal bolts = 2 + 1\textbackslash nTotal bolts = 3.} \\
\addlinespace[2pt]
\tiny{Shortening} &
\tiny{To find total bolts for the robe (blue + white fiber):\textbackslash n\textbackslash nStep 1: Blue fiber needed is 2 bolts.\textbackslash n\textbackslash nStep 2: White fiber is half blue, so 2/2=1 bolt.\textbackslash n\textbackslash nStep 3: Total = 2+1=3.} \\
\addlinespace[2pt]
\tiny{Lengthening} &
\tiny{To solve the problem of determining the total number of bolts required for the robe, we must consider both the blue fiber and white fiber that are necessary for its construction.\textbackslash n\textbackslash nStep 1: First, we need to identify the specific amount of blue fiber that is needed to make the robe. After checking the relevant specifications or requirements for the robe, it is determined that the robe takes 2 bolts of blue fiber.\textbackslash n\textbackslash nStep 2: Next, we need to calculate the amount of white fiber that is needed. According to the given information, the robe takes half as much white fiber as blue fiber. Since we already established that it takes 2 bolts of blue fiber, we can find the amount of white fiber by taking half of that number. To do this, we perform the calculation: 2 bolts (blue fiber) divided by 2, which equals 1 bolt. Therefore, the robe requires 1 bolt of white fiber.\textbackslash n\textbackslash nStep 3: Finally, to find the total number of bolts required for the robe, we need to add together the amounts of blue fiber and white fiber that we have determined. This means we take the 2 bolts of blue fiber and add the 1 bolt of white fiber to it. Performing this addition gives us: 2 + 1 = 3.} \\
\addlinespace[2pt]
\tiny{Removing newline token} &
\tiny{To solve this problem, we need to determine the total number of bolts required for the robe, which includes both blue and white fiber. Step 1: Identify the amount of blue fiber needed. The robe takes 2 bolts of blue fiber. Step 2: Calculate the amount of white fiber needed. The robe takes half as much white fiber as blue fiber. Since it takes 2 bolts of blue fiber, it will take 2 / 2 = 1 bolt of white fiber. Step 3: Add the amounts of blue and white fiber to find the total number of bolts. Total bolts = Blue fiber + White fiber. Total bolts = 2 + 1. Total bolts = 3. Answer: 3.} \\
\addlinespace[2pt]
\midrule
\multicolumn{2}{c}{{Logic-breaking modifications}} \\
\midrule
\tiny{Changing numbers} &
\tiny{To solve this problem, we need to determine the total number of bolts required for the robe, which includes both blue and white fiber.\textbackslash n\textbackslash nStep 1: Identify the amount of blue fiber needed.\textbackslash nThe robe takes 4 bolts of blue fiber.\textbackslash n\textbackslash nStep 2: Calculate the amount of white fiber needed.\textbackslash nThe robe takes half as much white fiber as blue fiber. Since it takes 4 bolts of blue fiber, it will take 4 / 2 = 2 bolts of white fiber.\textbackslash n\textbackslash nStep 3: Add the amounts of blue and white fiber to find the total number of bolts.\textbackslash nTotal bolts = Blue fiber + White fiber\textbackslash nTotal bolts = 4 + 2\textbackslash nTotal bolts = 6\textbackslash n\textbackslash nAnswer: 6.} \\
\addlinespace[2pt]
\tiny{Altering logic} &
\tiny{To solve this problem, we need to determine the total number of bolts required for the robe, which includes both blue and white fiber.\textbackslash n\textbackslash nStep 1: Identify the amount of blue fiber needed.\textbackslash nThe robe takes 2 bolts of white fiber.\textbackslash n\textbackslash nStep 2: Calculate the amount of white fiber needed.\textbackslash nThe robe takes twice as much white fiber as blue fiber. Since it takes 2 bolts of blue fiber, it will take 2 * 2 = 4 bolts of white fiber.\textbackslash n\textbackslash nStep 3: Subtract the amounts of blue and white fiber to find the total number of bolts.\textbackslash nTotal bolts = Blue fiber - White fiber\textbackslash nTotal bolts = 2 - 1\textbackslash nTotal bolts = 1\textbackslash n\textbackslash nAnswer: 1.} \\
\addlinespace[2pt]
\tiny{Adding extra steps} &
\tiny{To solve this problem, we need to determine the total number of bolts required for the robe, which includes both blue and white fiber.\textbackslash n\textbackslash nStep 1: Identify the amount of blue fiber needed.\textbackslash nThe robe takes 2 bolts of blue fiber.\textbackslash n\textbackslash nStep 2: Calculate the amount of white fiber needed.\textbackslash nThe robe takes half as much white fiber as blue fiber. Since it takes 2 bolts of blue fiber, it will take 2 / 2 = 1 bolt of white fiber.\textbackslash n\textbackslash nStep 3: Add the amounts of blue and white fiber to find the total number of bolts.\textbackslash nTotal bolts = Blue fiber + White fiber\textbackslash nTotal bolts = 2 + 1\textbackslash nTotal bolts = 3\textbackslash n\textbackslash nStep 4: Subtract 1 from the total because of a miscalculation.\textbackslash nTotal bolts = 3 - 1\textbackslash nTotal bolts = 2\textbackslash n\textbackslash nAnswer: 2.} \\
\addlinespace[2pt]
\tiny{Corrupting final steps} &
\tiny{To solve this problem, we need to determine the total number of bolts required for the robe, which includes both blue and white fiber.\textbackslash n\textbackslash nStep 1: Identify the amount of blue fiber needed.\textbackslash nThe robe takes 2 bolts of blue fiber.\textbackslash n\textbackslash nStep 2: Calculate the amount of white fiber needed.\textbackslash nThe robe takes half as much white fiber as blue fiber. Since it takes 2 bolts of blue fiber, it will take 2 / 2 = 1 bolt of white fiber.\textbackslash n\textbackslash nStep 3: Multiply the amounts of blue and white fiber to find the total number of bolts.\textbackslash nTotal bolts = Blue fiber × White fiber\textbackslash nTotal bolts = 2 × 1\textbackslash nTotal bolts = 2\textbackslash n\textbackslash nAnswer: 2.} \\
\addlinespace[2pt]
\tiny{Repeating the question} &
\tiny{To solve this problem, we need to determine the total number of bolts required for the robe, which includes both blue and white fiber. A robe takes 2 bolts of blue fiber and half that much white fiber.  How many bolts in total does it take? A robe takes 2 bolts of blue fiber and half that much white fiber.  How many bolts in total does it take? A robe takes 2 bolts of blue fiber and half that much white fiber.  How many bolts in total does it take? A robe takes 2 bolts of blue fiber and half that much white fiber.  How many bolts in total does it take? A robe takes 2 bolts of blue fiber and half that much white fiber.  How many bolts in total does it take?}\\
\addlinespace[2pt]
\bottomrule
\end{tabularx}
\label{tab:modification}
\end{table}

\newpage

\section{Trace-Level Confidence Dynamics on AIME Problems}\label{appendix:morevis}
Figure~\ref{fig:aime_conf} plots trace-level confidence trajectories for all AIME24/25 problems under \textit{Qwen3-4B-Instruct}. Each curve corresponds to one sampled trace, with correct traces in red and incorrect traces in green.

\begin{figure}[!h]
    \centering
    \begin{subfigure}{1.0\textwidth}
        \centering
        \includegraphics[width=0.85\linewidth]{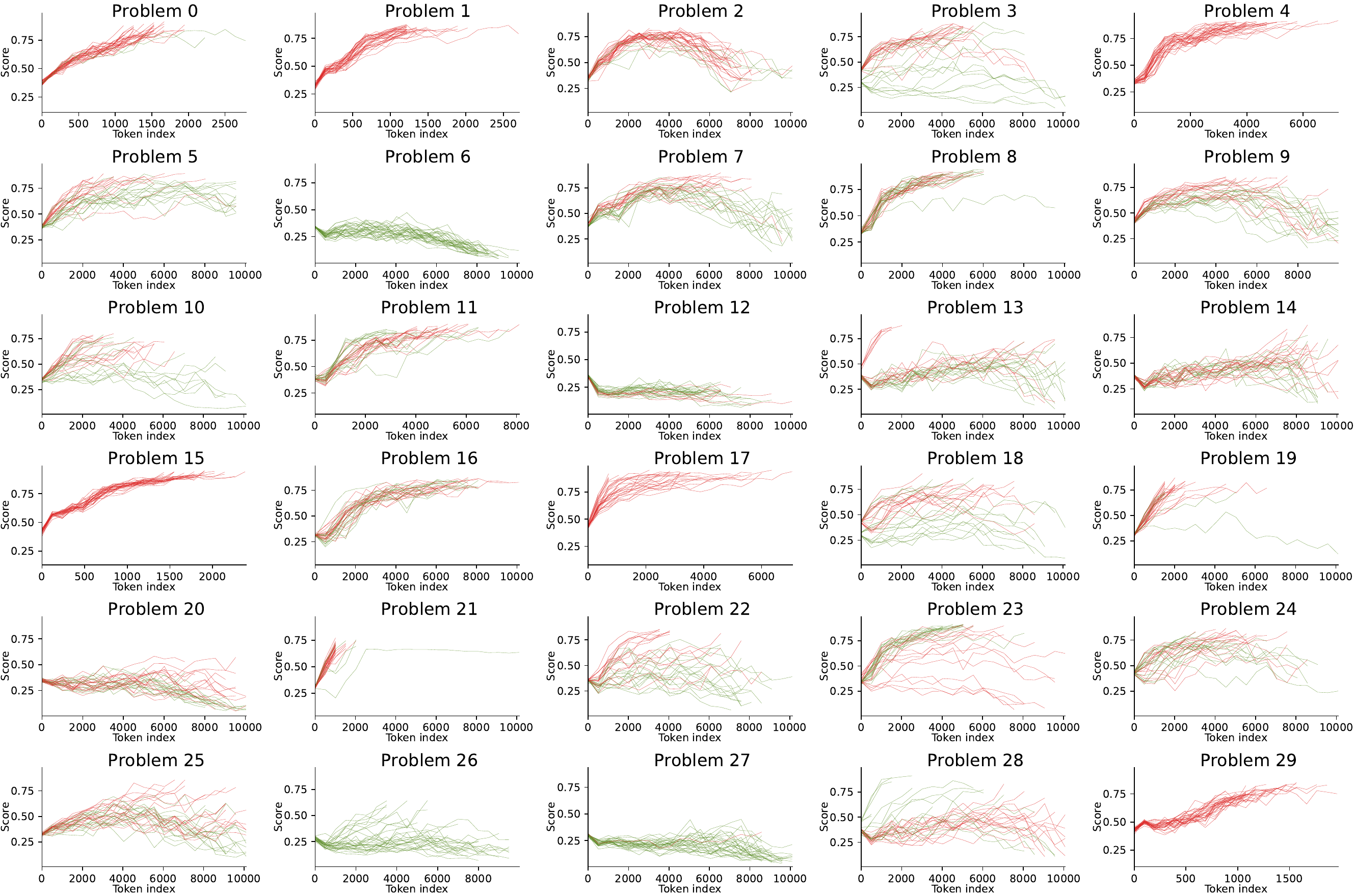}
        \vspace{-0.05in}
        \caption{AIME24}
        \label{fig:aime24_conf}
    \end{subfigure}
    \vspace{0.02in}
    \begin{subfigure}{1.0\textwidth}
        \centering
        \includegraphics[width=0.85\linewidth]{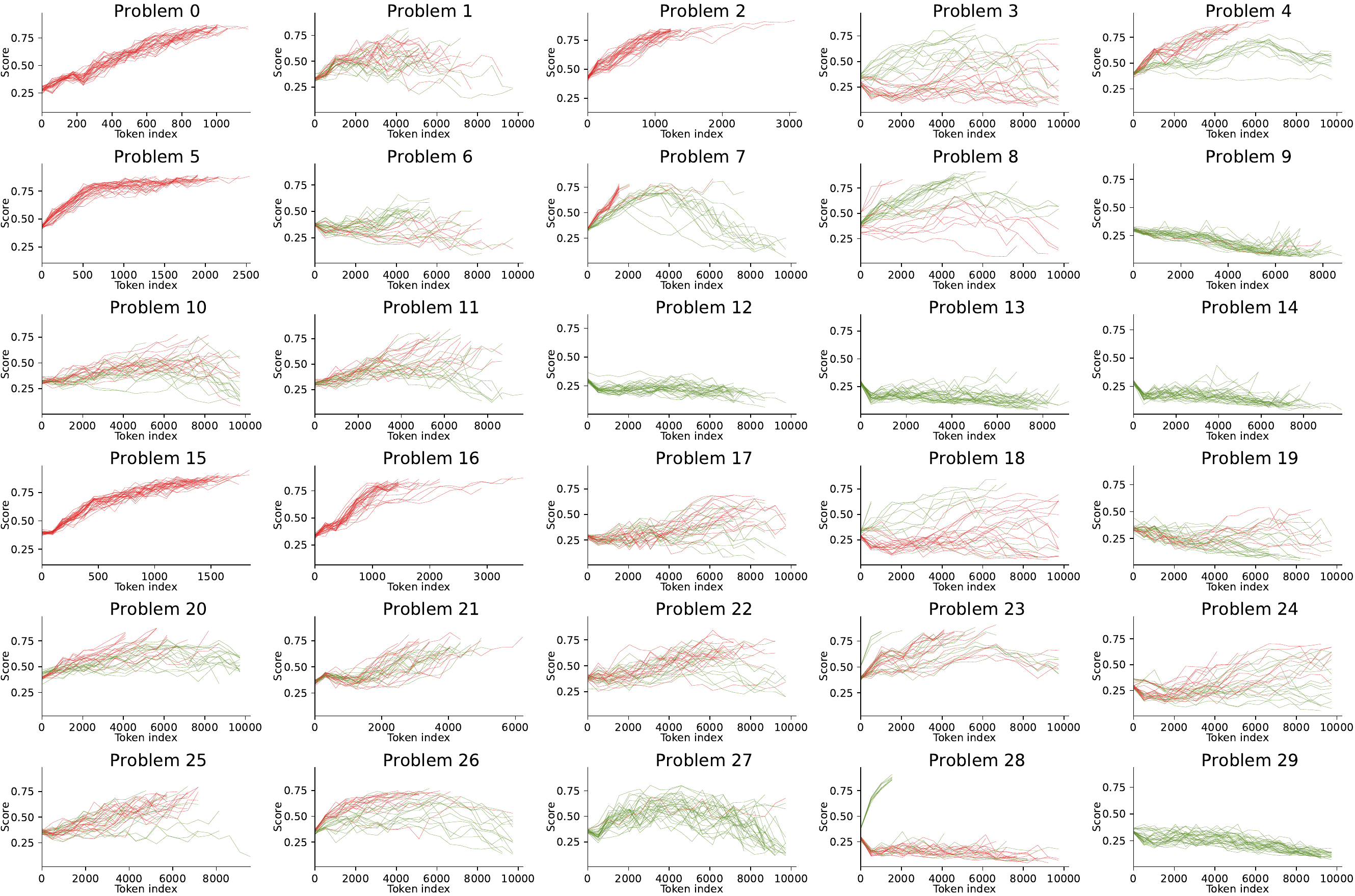}
        \vspace{-0.05in}
        \caption{AIME25}
        \label{fig:aime25_conf}
    \end{subfigure}
    \caption{Trace-level confidence trajectories across problems in (a) AIME24 and (b) AIME25.}
    \label{fig:aime_conf}
\end{figure}



\section*{Supplementary material (transcribed from pages 19-23 of the arXiv PDF: examples.pdf)}

## G. Finer-Grained Visualization of Reasoning Correctness Estimation

To complement the main results, we present token-level visualizations of OTV's correctness estimates, where the generated reasoning text is color-coded from green (low confidence) to red (high confidence). We include six examples spanning three comparisons: 1) incorrect vs. correct traces for the same problem (Examples 1–2), demonstrating clear confidence separation; 2) shorter vs. longer correct solutions (Examples 3–4), where shorter traces sustain consistently higher confidence while longer traces show greater fluctuations, indicating intermediate uncertainty; and 3) correct vs. incorrect traces produced by a larger model on the same problem (Examples 5–6), where confidence declines near the end in both cases, but remains substantially higher overall for correct traces.

---

**Example 1 (AIME25, Problem 23): Qwen3-4B-Instruct-OTV**

There are $n$ values of $x$ in the interval $0 < x < 2\pi$ where $f(x) = \sin(7\pi \cdot \sin(5x)) = 0$. For $t$ of these $n$ values of $x$, the graph of $y = f(x)$ is tangent to the $x$-axis. Find $n + t$. (Correct Answer: 149)

- - - - - - - - - - - - - - - - - - - - - - - - - - - - - - - - - - - - - - - - - - - - - - - - - - - - - - - -

We are given the function: $f(x) = \sin(7\pi \cdot \sin(5x))$ We are to find: - The total number $n$ of values of $x$ in the interval $0 x 2\pi$ such that $f(x) = 0$ - Among these $n$ values, how many $t$ are such that the graph is **tangent** to the $x$-axis (i.e., $f(x) = 0$ and $f'(x) = 0$) - Then compute $n + t$ — Step 1: When is $f(x) = 0$? We have: $f(x) = \sin(7\pi \cdot \sin(5x)) = 0$ The sine function is zero when its argument is an integer multiple of $\pi$: $7\pi \cdot \sin(5x) = k\pi$ for some integer $k$ Divide both sides by $\pi$: $7\sin(5x) = k \Rightarrow \sin(5x) = \frac{k}{7}$ Now, since $\sin(5x) \in [-1, 1]$, we must have: $-1 \le \frac{k}{7} \le 1 \Rightarrow -7 \le k \le 7$ So $k$ ranges from $-7$ to 7, inclusive. That's 15 integer values. For each such $k$, we solve: $\sin(5x) = \frac{k}{7}$ We want to find how many **solutions** $x \in (0, 2\pi)$ satisfy this equation. Let's define $\theta = 5x$. Then $x \in (0, 10\pi)$ So we are solving $\sin(\theta) = \frac{k}{7}$ for $\theta \in (0, 10\pi)$ — Step 2: Number of solutions to $\sin(\theta) = c$ in $(0, 10\pi)$ For a fixed $c \in (-1, 1)$, the equation $\sin(\theta) = c$ has **two solutions per period** of $2\pi$, except when $c = \pm 1$, where it has one solution per period (at the peak/trough). So we analyze based on the value of $k$: Case 1: $k = 0$ Then $\sin(5x) = 0$ Solutions: $\sin(\theta) = 0$, $\theta \in (0, 10\pi)$ Solutions occur at $\theta = \pi, 2\pi, 3\pi, 4\pi, 5\pi, 6\pi, 7\pi, 8\pi, 9\pi$ Note: $\theta = 0$ and $10\pi$ are excluded because $x = 0$ and $x = 2\pi$ are excluded. So $\theta = \pi, 2\pi, 3\pi, 4\pi, 5\pi, 6\pi, 7\pi, 8\pi, 9\pi \Rightarrow 9$ solutions Each corresponds to $x = \theta/5$, so $x = \pi/5, 2\pi/5, \dots, 9\pi/5$ All are in $(0, 2\pi)$ So **9 solutions** for $k = 0$ — Case 2: $k \ne 0$ We consider $k = \pm 1, \pm 2, \dots, \pm 7$ For each such $k$, $\frac{k}{7} \in (-1, 1)$, so $\sin(\theta) = \frac{k}{7}$ has **two solutions per period** of $2\pi$ in $(0, 10\pi)$ How many periods in $(0, 10\pi)$? $\frac{10\pi}{2\pi} = 5$ full periods So in each period, two solutions $\to$ total of $5 \times 2 = 10$ solutions per value of $k \in \{-7, \dots, 7\} \setminus \{0\}$ But wait — are there any exceptions? Only when $\sin(\theta) = \pm 1$, i.e., $k = \pm 7$ Let's check: - $k = 7$: $\sin(5x) = 1$ - $k = -7$: $\sin(5x) = -1$ For these, $\sin(\theta) = \pm 1$ occurs once: at $\theta = \frac{\pi}{2} + 2\pi m$ Similarly, $\sin(\theta) = -1$ at $\theta = \frac{3\pi}{2} + 2\pi m$ So in $(0, 10\pi)$, how many such solutions? - For $\sin(\theta) = 1$: - For $\sin(\theta) = 1$: $\theta = \frac{\pi}{2}, \frac{5\pi}{2}, \frac{9\pi}{2}, \frac{13\pi}{2}, \frac{17\pi}{2}$ Check: $\frac{\pi}{2} + 2\pi m 10\pi$ Divide by $\pi$: $\frac{1}{2} + 2m 10 \Rightarrow 2m 9.5 \Rightarrow m 4.75$ So $m = 0, 1, 2, 3, 4 \to 5$ values Similarly for $\sin(\theta) = -1$: $\theta = \frac{3\pi}{2} + 2\pi m$ $\frac{3}{2} + 2m 10 \Rightarrow 2m 9.5 \Rightarrow m 4.75 \to m = 0, 1, 2, 3, 4 \to 5$ values So for $k = \pm 7$, we have **5 solutions each** For other $k = \pm 1, \pm 2, \pm 3, \pm 4, \pm 5, \pm 6$, we have $\sin(\theta) = \frac{k}{7} \in (-1, 1)$, so **two solutions per period**, 5 periods $\to 5 \times 2 = 10$ solutions each — Step 3: Count total number of solutions $n$ List all $k$ from $-7$ to 7: - $k = 0$: 9 solutions - $k = \pm 1, \pm 2, \pm 3, \pm 4, \pm 5, \pm 6$: 6 values of $k$, each with 10 solutions $\to 6 \times 10 = 60$ - $k = \pm 7$: each with 5 solutions $\to 2 \times 5 = 10$ Total $n = 9 + 60 + 10 = 79$ So $n = 79$ — Step 4: Find $t$ — number of values where the graph is **tangent** to the $x$-axis This means $f(x) = 0$ **and** $f'(x) = 0$ We need to find the values of $x$ where $f(x) = 0$ and $f'(x) = 0$ First, recall: $f(x) = \sin(7\pi \cdot \sin(5x))$ Let $u(x) = 7\pi \cdot \sin(5x)$ So $f(x) = \sin(7\pi \cdot \sin(5x))$ Let $u(x) = 7\pi \cdot \sin(u(x))$ Then: $f'(x) = \cos(u(x)) \cdot u'(x)$ Compute $u'(x)$: $u'(x) = 7\pi \cdot \cos(5x) \cdot 5 = 35\pi \cos(5x)$ So: $f'(x) = \cos(7\pi \cdot \sin(5x)) \cdot 35\pi \cos(5x)$ We want $f'(x) = 0$ **and** $f(x) = 0$ So two conditions: 1. $\sin(7\pi \sin(5x)) = 0$ 2. $\cos(7\pi \sin(5x)) \cdot 35\pi \cos(5x) = 0$ Note that $35\pi \ne 0$, so we can divide: So condition 2 becomes: $\cos(7\pi \sin(5x)) = 0$ or $\cos(5x) = 0$ But from condition 1: $\sin(7\pi \sin(5x)) = 0$ Let $s = \sin(5x)$. Then: - $\sin(7\pi s) = 0$ - Either $\cos(7\pi s) = 0$ or $\cos(5x) = 0$ We are to find values where **both** conditions hold. So we consider two cases: — Case A: $\cos(5x) = 0$ Then $5x = \frac{\pi}{2} + m\pi \Rightarrow x = \frac{\pi}{10} + \frac{m\pi}{5}$ We need $x \in (0, 2\pi)$ So $0 \frac{1}{10} + \frac{m}{5} 2\pi$ Divide by $\pi$: $0 \frac{1}{10} + \frac{m}{5} 2$ Multiply by 10: $0 1 + 2m 20 2m - 1 \Rightarrow m - 0.5 \Rightarrow m \ge 0 2m 19 \Rightarrow m 9.5 \Rightarrow m \le 9$ So $m = 0, 1, 2, \dots, 9 \to 10$ values Now, for each such $x$, we must check whether $f(x) = 0$, i.e., $\sin(7\pi \sin(5x)) = 0$ At such $x$, $\cos(5x) = 0$, so $5x = \frac{\pi}{2} + m\pi$ Then $\sin(5x) = \sin(\frac{\pi}{2} + m\pi) = (-1)^m$ So $\sin(5x) = (-1)^m \in \{-1, 1\}$ Then $7\pi \sin(5x) = \pm 7\pi$ So $\sin(7\pi \sin(5x)) = \sin(\pm 7\pi) = \sin(\mp 7\pi)$ Because $\sin(7\pi) = \sin(\pi) = 0$ So yes, $f(x) = 0$ Also, $f'(x) = 0$ because $\cos(5x) = 0$ So **all 10 values** of $x$ satisfy both $f(x) = 0$ and $f'(x) = 0$ So **$t$ includes these 10 values** — Case B: $\cos(7\pi \sin(5x)) = 0$ **and** $\sin(7\pi \sin(5x)) = 0$ Is this possible? Let $\theta = 7\pi \sin(5x)$ We want: $\sin(\theta) = 0 \cdot \cos(\theta) = 0$ But $\sin(\theta) = 0$ and $\cos(\theta) = 0$ is **impossible** for any real $\theta$ Because $\sin^2(\theta) + \cos^2(\theta) = 1$ So both cannot be zero. Therefore, **no solutions** in this case. So the only points where $f(x) = 0$ and $f'(x) = 0$ are when $\cos(5x) = 0$ Thus, $t = 10$ — Final Step: Compute $n + t$ We found: - $n = 79$ - $t = 10$ So: Answer: 89

19

**Example 2 (AIME25, Problem 23): Qwen3-4B-Instruct-OTV**

There are $n$ values of $x$ in the interval $0 < x < 2\pi$ where $f(x) = \sin(7\pi \cdot \sin(5x)) = 0$. For $t$ of these $n$ values of $x$, the graph of $y = f(x)$ is tangent to the $x$-axis. Find $n + t$. (Correct Answer: 149)

---

We are given the function: $f(x) = \sin(7\pi \cdot \sin(5x))$ We are to find: - $n$: the number of values of $x$ in the interval $0 < x < 2\pi$ where $f(x) = 0$ - $t$: the number of those $n$ values where the graph is **tangent** to the x-axis (i.e., where $f(x) = 0$ and $f'(x) = 0$) - Then compute $n + t$ — Step 1: When is $f(x) = 0$? We have: $f(x) = \sin(7\pi \cdot \sin(5x)) = 0$ The sine function is zero when its argument is an integer multiple of $\pi$: $7\pi \cdot \sin(5x) = k\pi$ for some integer $k$ Divide both sides by $\pi$: $7\sin(5x) = k \implies \sin(5x) = \frac{k}{7}$ Now, since $\sin(5x) \in [-1, 1]$, we require: $-1 \le \frac{k}{7} \le 1 \implies -7 \le k \le 7$ So $k = -7, -6, \ldots, 0, \ldots, 6, 7$ That's 15 integer values of $k$. For each such $k$, we solve: $\sin(5x) = \frac{k}{7}$ We now want to count how many solutions $x \in (0, 2\pi)$ satisfy this. — Step 2: Count total number of solutions $n$ Let's consider $\sin(5x) = c$, where $c = \frac{k}{7}$, and $c \in [-1, 1]$. For each $c \in (-1, 1)$, the equation $\sin(5x) = c$ has **two** solutions per period of $5x$, i.e., for every interval of length $\frac{2\pi}{5}$ in $x$, there are two solutions (since sine is periodic and symmetric). But we need to count how many solutions exist in $x \in (0, 2\pi)$. Let's define $\theta = 5x$. Then as $x$ goes from 0 to $2\pi$, $\theta$ goes from 0 to $10\pi$. So $\theta \in (0, 10\pi)$ We now solve $\sin(\theta) = \frac{k}{7}$ for $\theta \in (0, 10\pi)$ For each $k$, we consider how many solutions $\theta \in (0, 10\pi)$ satisfy $\sin(\theta) = \frac{k}{7}$ — General behavior of $\sin(\theta) = c$: - If $c = \pm1$, there is **one** solution per period of $2\pi$, but only at the peaks/troughs. - If $c \in (-1, 1)$, there are **two** solutions per $2\pi$ interval (one in $(0, \pi)$, one in $(\pi, 2\pi)$) But note: $\theta \in (0, 10\pi)$, which is **5 full periods** of $2\pi$ So total length: $10\pi$, so 5 periods. — Let's consider each $k$: - For $k = \pm7$: $\sin(5x) = \pm1$ - $\sin(\theta) = 1$: occurs at $\theta = \frac{\pi}{2} + 2\pi m$ - $\sin(\theta) = -1$: occurs at $\theta = \frac{3\pi}{2} + 2\pi m$ In $(0, 10\pi)$, how many such values? - $\theta = \frac{\pi}{2}, \frac{5\pi}{2}, \frac{9\pi}{2}, \frac{13\pi}{2}, \frac{17\pi}{2}, \frac{21\pi}{2}, \frac{25\pi}{2}$ $\frac{25\pi}{2} = 12.5\pi$? $10\pi = 20\pi/2$, so up to $m$ such that $\frac{\pi}{2} + 2\pi m 10\pi$ $\frac{\pi}{2} + 2\pi m 10\pi$ $2\pi m 10\pi - \frac{\pi}{2} = \frac{19\pi}{2}$ $m \frac{19}{4} = 4.75$ So $m = 0, 1, 2, 3, 4 \to 5$ values So $\sin(\theta) = 1$ has 5 solutions Similarly, $\sin(\theta) = -1$: occurs at $\theta = \frac{3\pi}{2} + 2\pi m$ $\frac{3\pi}{2} + 2\pi m 10\pi$ $2\pi m 10\pi - \frac{3\pi}{2} = \frac{17\pi}{2}$ $m 17/4 = 4.25 \to m = 0, 1, 2, 3, 4 \to 5$ values So for $k = \pm7$: 5 + 5 = 10 solutions - For $k = 0$: $\sin(\theta) = 0$ Solutions at $\theta = \pi m$, $m = 1, 2, 3, 4, 5, 6, 7, 8, 9 \to$ since $\theta \in (0, 10\pi)$ $\theta = \pi, 2\pi, 3\pi, 4\pi, 5\pi, 6\pi, 7\pi, 8\pi, 9\pi \to m = 1$ to 9 $m = 10 \to 10\pi$, which is excluded since $\theta 10\pi$ So 9 solutions - For $k = \pm1, \pm2, \pm3, \pm4, \pm5, \pm6$: these are all in $(-1, 1)$, and not $\pm1$ For each such $c \in (-1, 1)$, in each interval of $2\pi$, there are **two** solutions. Since $\theta \in (0, 10\pi)$, which is 5 full periods, each such $c$ gives $2 \times 5 = 10$ solutions. How many such $k$? $k = \pm1, \pm2, \pm3, \pm4, \pm5, \pm6 \to 12$ values So total solutions from these: $12 \times 10 = 120$ Wait — but we must be cautious: are these values distinct? Actually, each $k$ gives a different equation $\sin(5x) = k/7$, and each has its own solution. So yes, we are counting solutions for each $k$, so we can sum over all $k$. So total $n$: - $k = \pm7$: 10 solutions - $k = 0$: 9 solutions - $k = \pm1, \pm2, \pm3, \pm4, \pm5, \pm6$: 12 values, each with 10 solutions $\to 12 \times 10 = 120$ Total: $n = 10 + 9 + 120 = 139$ Wait — is that correct? Wait: $k = -7$ to 7: total 15 values. We broke it as: - $k = \pm7$: 2 values $\to 10$ solutions - $k = 0$: 1 value $\to 9$ solutions - $k = \pm1$ to $\pm6$: 12 values $\to 120$ solutions Total: $10 + 9 + 120 = 139$ But let's double-check the number of solutions for $\sin(\theta) = c$ in $(0, 10\pi)$ For $c \in (-1, 1)$, $\sin(\theta) = c$ has two solutions per $2\pi$ interval. In $(0, 10\pi)$, which is 5 full periods, there are $5 \times 2 = 10$ solutions per $c$. Yes. For $c = \pm1$, only one solution per $2\pi$ interval. In $(0, 10\pi)$, 5 periods $\to$ 5 solutions per $\pm1$ So $\sin(\theta) = 1$: 5 solutions $\sin(\theta) = -1$: 5 solutions $\to$ total 10 For $c = 0$: at $\theta = \pi, 2\pi, 3\pi, 4\pi, 5\pi, 6\pi, 7\pi, 8\pi, 9\pi \to \theta = 10\pi$ is excluded $\to$ so 9 solutions Yes. So total $n = 139$ But wait — are all these $x$ values in $(0, 2\pi)$? Yes, because $\theta = 5x \in (0, 10\pi)$, so $x \in (0, 2\pi)$ So yes. So $n = 139$ — Step 3: Find $t$: number of $x$ values where $f(x) = 0$ **and** $f'(x) = 0$ We need the graph to be tangent to the x-axis — that is, a **double root** — so $f(x) = 0$ and $f'(x) = 0$ We need to compute $f'(x)$ and find where both $f(x) = 0$ Let: $f(x) = \sin(7\pi \cdot \sin(5x))$ Let $u = 7\pi \cdot \sin(5x)$, so $f(x) = \sin(u)$ Then: $f'(x) = \cos(u) \cdot \frac{d}{dx}[7\pi \sin(5x)] = \cos(7\pi \sin(5x)) \cdot 7\pi \cdot 5\cos(5x)$ $f'(x) = 35\pi\cos(7\pi\sin(5x)) \cdot \cos(5x)$ We want $f(x) = 0$ **and** $f'(x) = 0$ So: 1. $\sin(7\pi\sin(5x)) = 0$ 2. $\cos(7\pi\sin(5x)) \cdot \cos(5x) = 0$ From (1): $7\pi\sin(5x) = k\pi \Rightarrow \sin(5x) = \frac{k}{7}$ for integer $k \in [-7, 7]$ From (2): $\cos(7\pi\sin(5x)) \cdot \cos(5x) = 0$ So either: - $\cos(7\pi\sin(5x)) = 0$, or - $\cos(5x) = 0$ But note: $7\pi\sin(5x) = k\pi$, so $7\pi\sin(5x) = k\pi$ So $\cos(7\pi\sin(5x)) = \cos(k\pi)$ Because $7\pi\sin(5x) = k\pi$ So $\cos(k\pi) = (-1)^k$ So $\cos(7\pi\sin(5x)) = (-1)^k$ This is **never zero** unless $(-1)^k = 0$, which is impossible. So $\cos(7\pi\sin(5x)) \neq 0$ for any integer $k$ Therefore, the only way for $f'(x) = 0$ is if $\cos(5x) = 0$ So we require: - $\sin(5x) = \frac{k}{7}$ - $\cos(5x) = 0$ Now, when is $\cos(5x) = 0$? That **happens** when $5x = \frac{\pi}{2} + m\pi$, for integer $m$ So $x = \frac{\pi}{10} + \frac{m\pi}{5}$ We now need to find values of $x \in (0, 2\pi)$ such that: - $\cos(5x) = 0 - \sin(5x) = \frac{k}{7}$ for some integer $k$ But when $\cos(5x) = 0$, $\sin(5x) = \pm1$ So $\sin(5x) = \pm1$ Therefore, $\frac{k}{7} = \pm1 \Rightarrow k = \pm7$ So only when $k = \pm7$, we can have $\cos(5x) = 0$ and $\sin(5x) = \pm1$ Now, we already **found that** when $\sin(5x) = \pm1$, $f(x) = \sin(7\pi \cdot (\pm1)) = \sin(7\pi) = 0$ Because $\sin(7\pi) = \sin(\pi) = 0$ So $f(x) = 0$ And we just showed $f'(x) = 0$ when $\cos(5x) = 0$ Therefore, at all points where $\sin(5x) = \pm1$, we have both $f(x) = 0$ and $f'(x) = 0$ Therefore, **tangent points occur precisely when** $\sin(5x) = \pm1$ We already computed that in $(0, 10\pi)$, $\theta = 5x$, $\sin(\theta) = 1$ has 5 solutions, $\sin(\theta) = -1$ has 5 solutions $\to$ total **10 solutions** So $t = 10$ But wait — are these points **distinct**? Yes — for $\sin(\theta) = 1$: $\theta = \frac{\pi}{2}, \frac{5\pi}{2}, \frac{9\pi}{2}, \frac{13\pi}{2}, \frac{17\pi}{2} = 8.5\pi10\pi$, next would be $\frac{21\pi}{2} = 10.5\pi10\pi$ Similarly, $\sin(\theta) = -1$: $\theta = \frac{3\pi}{2}, \frac{7\pi}{2}, \frac{11\pi}{2}, \frac{15\pi}{2}, \frac{19\pi}{2} = 9.5\pi10\pi$, next $\frac{23\pi}{2} = 11.5\pi10\pi$ All are distinct and in $(0, 10\pi)$ So total $t = 5 + 5 = 10$ — Final Step: Compute $n + t$ We found: - $n = 139$ - $t = 10$ So: $n + t = 139 + 10 = 149$ Answer: **149**

**Example 3 (AIME24, Problem 17): Qwen3-4B-Instruct-OTV**

There exist real numbers $x$ and $y$, both greater than 1, such that $\log_x(y^x) = \log_y(x^{4y}) = 10$. Find $xy$. (Correct Answer: 25)

---

We are given that: $\log_x(y^x) = 10$ and $\log_y(x^{4y}) = 10$ We are to find $xy$, where $x>1$, $y>1$. — Step 1: Simplify the first equation $\log_x(y^x) = 10$ Use logarithmic identity: $\log_b(a^c) = c\log_b a$, so $x\log_x y = 10$ (Equation 1) — Step 2: Simplify the second equation $\log_y(x^{4y}) = 10$ Again, use $\log_b(a^c) = c\log_b a$: $4y\log_y x = 10$ (Equation 2) — Step 3: Use substitution Let's define $a = \log_x y$. Then from Equation 1: $x \cdot a = 10 \Rightarrow a = \frac{10}{x}$ Now, recall that $\log_y x = \frac{1}{a} \Rightarrow \log_y x = \frac{1}{a}$ (since logarithms are reciprocals) So from Equation 2: $4y \cdot \log_y x = 10 \Rightarrow 4y \cdot \frac{1}{a} = 10$ Substitute $a = \frac{10}{x}$: $4y \cdot \frac{1}{\frac{10}{x}} = 10 \Rightarrow 4y \cdot \frac{x}{10} = 10$ Simplify: $\frac{4xy}{10} = 10 \Rightarrow \frac{2xy}{5} = 10$ Multiply both sides by 5: $2xy = 50 \Rightarrow xy = 25$ — Answer: 25

**Example 4 (AIME24, Problem 22): Qwen3-4B-Instruct-OTV**

Let $\triangle ABC$ have circumcenter $O$ and incenter $I$ with $\overline{IA} \perp \overline{OI}$, circumradius 13, and inradius 6. Find $AB \cdot AC$. (Correct Answer: 468)

---

We are given the following about triangle $\triangle ABC$: $O$ is the circumcenter. $I$ is the incenter. $IA \perp OI$. Circumradius $R = 13$. Inradius $r = 6$. We are to find $AB \cdot AC$. — Step 1: Use known geometric identities We are given a perpendicularity condition: $IA \perp OI$. This is a strong geometric condition and suggests we can use known formulas or coordinate geometry. We aim to compute $AB \cdot AC$, which is a product of two sides. Let's recall some known identities and formulas: — Step 2: Use known formula involving $IA$, $OI$, and perpendicularity There is a known identity involving the distance between incenter $I$ and circumcenter $O$: $OI^2 = R(R - 2r)$ This is **Euler's formula** for the distance between incenter and circumcenter Wait — actually, Euler's formula is: $OI^2 = R^2 - 2Rr$ Let's verify: $R = 13$, $r = 6$ So $OI^2 = 13^2 - 2 \cdot 13 \cdot 6 = 169 - 156 = 13$ So $OI^2 = 13$ — Step 3: Use the given perpendicularity $IA \perp OI$ We are told $IA \perp OI$. This is a key condition. Let's use vector geometry or coordinate geometry to exploit this. Let's place the triangle in the coordinate plane with convenient positions. — Step 4: Use coordinate geometry Let's place point $A$ at the origin, and use symmetry. But perhaps a better idea is to use known vector identities. There is a known vector identity: In triangle $\triangle ABC$, with circumcenter $O$, incenter $I$, and a point $A$, the condition $IA \perp OI$ (since $IA \cdot OI = 0$) implies... Let's define vectors with respect to point $A$. Alternatively, use known formula for $IA^2$: $IA^2 = \frac{bc}{(b+c)^2}\left[(b+c)^2 - a^2\right]$ But that might be messy. There is a known formula: $IA^2 = \frac{bc}{(b+c)^2}(b+c-a)(b+c+a)$ Alternatively, standard formula: $IA^2 = r^2 + (s-a)^2$ Yes! That's a standard identity: In any triangle, the distance from the incenter $I$ to vertex $A$ is: $IA^2 = r^2 + (s-a)^2$ where $s$ is the semi-perimeter, and $a = BC$, $b = AC$, $c = AB$. We are to find $AB \cdot AC = bc$. So, let's define: $a = BC$, $b = AC$, $c = AB$. $s = \frac{a+b+c}{2}$. $R = 13$, $OI^2 = 13$ We also know the formula: $IA^2 = r^2 + (s-a)^2$ We are told $IA \perp OI$, so the vectors $I\vec{A}$ and $O\vec{I}$ are perpendicular. — Step 5: Use known identity for perpendicularity of $IA$ and $OI$ There is a known result in triangle geometry: In triangle $\triangle ABC$, if $IA \perp OI$, then: $bc = \frac{r^2 + 4Rr}{\sin^2 A}$ Wait — not sure. Alternatively, a known result from olympiad geometry: If $IA \perp OI$, then $\frac{r}{R} = \frac{1}{2}\left(1 - \frac{?}{?}\right)$ Not sure. Alternatively, we can use a known formula involving $OI^2$, $IA^2$, and angle between them. Since $IA \perp OI$, the angle between vectors $I\vec{A}$ and $O\vec{I}$ is $90°$, so their dot product is zero. Let's use vector identities. Let's place point $A$ at the origin, and use vectors. Let — Let $\vec{A} = \vec{0}$. Let $\vec{B}$, $\vec{C}$ be vectors. Then circumcenter $O$ is equidistant from $A$, $B$, $C$, so $|\vec{O}| = |\vec{O} - \vec{B}| = |\vec{O} - \vec{C}| = R$. — Incenter $I$ has position vector: since $\vec{A} = \vec{0}$: $\vec{I} = \frac{b\vec{B}+c\vec{C}}{a+b+c}$ Now, $I\vec{A} = \vec{A} - \vec{I} = -\vec{I}$. $O\vec{I} = \vec{I} - \vec{O}$. We are told $IA \perp OI$, so: $I\vec{A} \cdot O\vec{I} = 0 \Rightarrow (-\vec{I}) \cdot (\vec{I} - \vec{O}) = 0 \Rightarrow -\vec{I} \cdot \vec{I} + \vec{I} \cdot \vec{O} = 0 \Rightarrow |\vec{I}|^2 = \vec{I} \cdot \vec{O}$ This is a key vector equation. Now, for $R = 13$, so $|\vec{O}|^2 = 169$ Also, $|\vec{O} - \vec{B}|^2 = 169$: $|\vec{O}|^2 - 2\vec{O} \cdot \vec{B} + |\vec{B}|^2 = 169$ So: $169 - 2\vec{O} \cdot \vec{B} + |\vec{B}|^2 = 169 \Rightarrow |\vec{B}|^2 = 2\vec{O} \cdot \vec{B}$ Similarly: $|\vec{O} - \vec{C}|^2 = 169 \Rightarrow |\vec{C}|^2 = 2\vec{O} \cdot \vec{C}$ Now, we have: $|\vec{b}|^2 = 2\vec{O} \cdot \vec{b}$, $|\vec{c}|^2 = 2\vec{O} \cdot \vec{c}$ Now, recall $\vec{I} = \frac{b\vec{B}+c\vec{C}}{a+b+c}$ Let $\sigma = a + b + c$, so $\vec{I} = \frac{b\vec{B}+c\vec{C}}{\sigma}$ Then $\vec{I} \cdot \vec{O} = \frac{1}{\sigma}(b\vec{B} \cdot \vec{O} + c\vec{C} \cdot \vec{O}) = \frac{1}{\sigma}\left(b \cdot \frac{1}{2}|\vec{B}|^2 + c \cdot \frac{1}{2}|\vec{C}|^2\right)$ Not sure. Now, $|\vec{b}| = c$, since $|\vec{b}| = AB = c$? Wait — no. Wait: we defined $\vec{b} = \vec{B}$, so $|\vec{b}| = AB = c$ Similarly, $|\vec{c}| = AC = b$ Yes! So: $|\vec{b}| = c$, $|\vec{c}| = b$ So $\vec{I} \cdot \vec{O} = \frac{1}{2\sigma}\left(bc^2 + cb^2\right) = \frac{1}{2\sigma}(bc)(c+b) = \frac{bc(b+c)}{2\sigma}$ On the other hand, $|\vec{I}|^2 = \frac{|b\vec{B}+c\vec{C}|^2}{\sigma^2} = \frac{1}{\sigma^2}(b^2|\vec{B}|^2 + c^2|\vec{C}|^2 + 2bc\vec{B} \cdot \vec{C})$ So: $|\vec{I}|^2 = \frac{1}{\sigma^2}(b^2c^2 + c^2b^2 + 2bc(\vec{b} \cdot \vec{c})) = \frac{1}{\sigma^2}(2b^2c^2 + 2bc(\vec{b} \cdot \vec{c}))$ We already have $|\vec{I}|^2 = \vec{I} \cdot \vec{O}$, so $\frac{bc(b+c)}{2\sigma} = \frac{1}{\sigma^2}(2b^2c^2 + 2bc(\vec{b} \cdot \vec{c}))$ Multiply both sides by $\sigma^2$: $\frac{bc(b+c)\sigma}{2} = 2b^2c^2 + 2bc(\vec{b} \cdot \vec{c})$ Simplify left: $\frac{bc(b+c)\sigma}{2} = 2b^2c^2 + 2bc(\vec{b} \cdot \vec{c})$ Divide both sides by $bc$ (nonzero): $\frac{(b+c)\sigma}{2} = 2bc + 2bc\cos A$ So: $\frac{(b+c)(a+b+c)}{2} = 2bc(1 + \cos A)$ So: $\frac{(b+c)(a+b+c)}{2} = 2bc(1 + \cos A)$ Now, $\vec{b} \cdot \vec{c} = |\vec{b}||\vec{c}|\cos A = bc\cos A$ So: $\vec{b} \cdot \vec{c} = bc\cos A$ Now recall $a = 2R\sin A$ — wait, that's known result from triangle geometry: $a = 2R\sin A = 26\sin A$ Let's use the identity: $\cos A = \frac{b^2+c^2-a^2}{2bc}$ Let's denote $x = b + c$, $y = bc$, and we want to find $y = bc$ Let's also write $a$ in terms of angles. But we also know that: Area $\Delta = rs = \frac{6\sigma}{2}$ So: $rs = \frac{6\sigma}{2}$ Also, $a = 26\sin A$ Also, $\frac{a}{\sin A} = 2R = 26$ Let's go back to the equation: $\frac{(b+c)(a+b+c)}{2} = 2bc(1 + \cos A)$ Let $x = b + c$, $y = bc$, $a$ is unknown Then $s = \frac{a+x}{2}$ Left side: $\frac{x(a+x)}{2}$ Right side: $2y(1 + \cos A)$ So we also know $\cos A = \frac{b^2+c^2-a^2}{2bc}$ Note $b^2 + c^2 = (b+c)^2 - 2bc = x^2 - 2y$ So $\cos A = \frac{x^2-2y-a^2}{2y}$ Thus: $1 + \cos A = 1 + \frac{x^2-2y-a^2}{2y} = \frac{2y+x^2-2y-a^2}{2y} = \frac{x^2-a^2}{2y}$ So right side becomes: $2y \cdot \frac{x^2-a^2}{2y} = x^2 - a^2$ So the equation becomes: $\frac{x(a+x)}{2} = x^2 - a^2$ Multiply both sides by 2: $x(a+x) = 2(x^2 - a^2)$ Right: $2x^2 - 2a^2$ Bring all to one side: $x^2 + ax - 2x^2 + 2a^2 = 0 \Rightarrow -x^2 + ax + 2a^2 = 0 \Rightarrow x^2 - ax - 2a^2 = 0$ This is a quadratic in $x$: $2a^2 + ax - x^2 = 0$ Solve for $a$: $x = \frac{-x \pm \sqrt{x^2 + 8x^2}}{2} = \frac{-x \pm \sqrt{9x^2}}{2} = \frac{-x \pm 3x}{2}$ So: $x = \frac{2x}{2} = x$ or $a = \frac{-4x}{2} = -x$ (invalid, since $a > 0$) So $a = \frac{2x}{2}$ That is $a = \frac{b+c}{2}$ Interesting! So the side $a = BC$ is the average of $b$ and $c$. So: $a = \frac{b+c}{2} \Rightarrow 2a = b + c$ Now, recall from earlier: We have the area formula: $\Delta = rs = 6s$ Also, $6s = \frac{abc}{4R} \Rightarrow abc = 312s$ Now, $s = \frac{a+b+c}{2}$ But $b + c = 2a$, so $a + b + c = a + 2a = 3a$ Thus $s = \frac{3a}{2}$ So the area formula: $\Delta = 312s \Rightarrow abc = 468abc$ But for $a, b, c$ are roots of $t^2 - 2at + y = 0$ We also have: $ay = 468ay \Rightarrow y = 468$ (since $a \neq 0$) Therefore: $bc = 468$ Wait — is this right? Let's verify if this is consistent with other known values. We have $R = 13$, $r = 6$, $bc = 468$, $b + c = 2a$, $s = \frac{3a}{2}$. We can now compute $a$ from known formulas. We can use the formula: $r = (s - a)\tan\frac{A}{2}$ But alternatively, use the formula: $\sin A = \frac{a}{2R}$ Also, use the identity: $\sin A = \frac{a}{26}$ Also, $\sin A = 234\sin A$ But also $\Delta = rs = 6 \cdot \frac{3a}{2} = 9a$ So $9a = 234\sin A = \sin A = \frac{9a}{234} = \frac{a}{26}$ But $\sin A = \frac{a}{26}$ Yes! Matches perfectly. So this is consistent. Therefore, the value $bc = 468$ is consistent with all conditions. Now, we must also check whether $OI^2 = R^2 - 2Rr = 13^2 - 2 \cdot 13 \cdot 6 = 169 - 156 = 13$ Yes — it's given, so this is automatically satisfied. And we used the perpendicularity condition to derive $a = \frac{b+c}{2}$, and from that, $bc = 468$ Therefore, the answer is 468

**Example 5 (AIME25, Problem 19): DAPO-Qwen-32B-OTV**

Suppose $\triangle ABC$ has angles $\angle BAC = 84°$, $\angle ABC = 60°$, and $\angle ACB = 36°^{circ}$. Let $D$, $E$, and $F$ be the midpoints of sides $\overline{BC}$, $\overline{AC}$, and $\overline{AB}$, respectively. The circumcircle of $\triangle DEF$ intersects $\overline{BD}$, $\overline{AE}$, and $\overline{AF}$ at points $G$, $H$, and $J$, respectively. The points $G$, $D$, $E$, $H$, $J$, and $F$ divide the circumcircle of $\triangle DEF$ into six minor arcs, as shown. Find $DE + 2 \cdot HJ + 3 \cdot FG$, where the arcs are measured in degrees. (Correct Answer: 336)

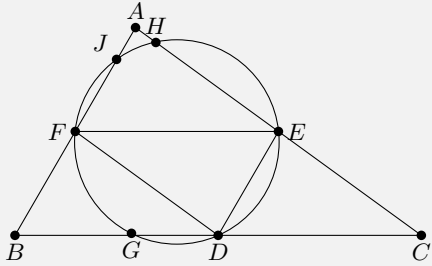

To approach this math problem step-by-step, we need to use properties of triangles, midpoints, and the nine-point circle. Given the triangle $\triangle ABC$ with $\angle BAC = 84°$, $\angle ABC = 60°$, and $\angle ACB = 36°$. The points $D$, $E$, and $F$ are the midpoints of sides $\overline{BC}$, $\overline{AC}$, and $\overline{AB}$ respectively. Therefore, $\triangle DEF$ is the medial triangle of $\triangle ABC$. The circumcircle of $\triangle DEF$ is actually the nine-point circle of $\triangle ABC$. The nine-point circle passes through the midpoints of the sides, the feet of the altitudes, and the midpoints of the segments joining each vertex to the orthocenter. The key observation here is that the circumcircle of $\triangle DEF$ is the nine-point circle of $\triangle ABC$. Now we need to find the measures of the arcs $DE$, $HJ$, and $FG$ on the circumcircle of $\triangle DEF$. To do so, let's calculate the angles at the center corresponding to these arcs. Since $\triangle DEF$ is the medial triangle of $\triangle ABC$, the angles of $\triangle DEF$ are the same as the angles of $\triangle ABC$. Specifically, $\angle EDF = \angle BAC = 84°$, $\angle DEF = \angle ABC = 60°$, $\angle DFE = \angle ACB = 36°$. The circumcircle of $\triangle DEF$ is the nine-point circle, and the central angles corresponding to the arcs $DE$, $EF$, and $FD$ will be twice the angles of $\triangle DEF$ at the vertices opposite these arcs. Thus, the measure of the arc $EF$ is twice $\angle EDF = 2 \times 84° = 168°$, the measure of the arc $FD$ is twice $\angle DEF = 2 \times 60° = 120°$, and the measure of the arc $DE$ is twice $\angle DFE = 2 \times 36° = 72°$. Now we need to determine the locations of points $G$, $H$, and $J$. Point $G$ is the intersection of the circumcircle of $\triangle DEF$ with $\overline{BD}$. Since $D$ is the midpoint of $\overline{BC}$, $\overline{BD}$ is a median. Point $H$ is the intersection of the circumcircle of $\triangle DEF$ with $\overline{AE}$, and point $J$ is the intersection of the circumcircle of $\triangle DEF$ with $\overline{AF}$. To find the arcs $DE$, $HJ$, and $FG$, we need to understand the configuration of these points. Since $D$, $E$, and $F$ are midpoints and the lines $\overline{BD}$, $\overline{AE}$, and $\overline{AF}$ are medians and angle bisectors or segments connected to the orthocenter and circumcenter properties in the context of the nine-point circle. Let's calculate the arcs step-by-step. Since $D$, $E$, and $F$ are midpoints, $\overline{BD}$ goes through the midpoint of $\overline{BC}$, and intersects the nine-point circle again at point $G$. Similarly, $\overline{AE}$ intersects it at point $H$, and $\overline{AF}$ intersects it at point $J$. The arc $DE$ is the arc subtended by $\angle DFE = 36°$, so the measure of $DE$ is $72°$. Now we need to determine $HJ$ and $FG$. Let's use some angle chasing and properties of the nine-point circle. Since $D$ is the midpoint of $\overline{BC}$, $E$ is the midpoint of $\overline{AC}$, and $F$ is the midpoint of $\overline{AB}$, the lines $\overline{BD}$, $\overline{AE}$, and $\overline{AF}$ are medians of $\triangle ABC$. These medians also pass through the centroid $G_\triangle$ of $\triangle ABC$. The points $G$, $H$, and $J$ are the second intersection points of the medians with the nine-point circle. To find the arcs $HJ$ and $FG$, we use the fact that the arcs are related to the angles subtended by chords corresponding to these arcs. Let's calculate $HJ$ and $FG$. Since $H$ is on $\overline{AE}$ and the nine-point circle, and $J$ is on $\overline{AF}$ and the nine-point circle, we need to determine the angles corresponding to these arcs. To find the measures of arcs $HJ$, we note that $H$ and $J$ lie on the lines $\overline{AE}$ and $\overline{AF}$ respectively. The key observation is that the configuration of the points $G$, $H$, and $J$ divides the circle into arcs that correspond to the angles of the triangle and the properties of the nine-point circle. By the properties of cyclic quadrilaterals and the angles subtended by chords on the circle, we can use the fact that the arcs are related to the angles of the triangle $\triangle ABC$. Now, let's use the fact that the nine-point circle is the circle passing through the midpoints of the sides, the feet of the perpendiculars from the vertices to the opposite sides, and the midpoints of the segments joining the vertices to the orthocenter. The configuration implies that the arcs $HJ$ and $FG$ correspond to the angles subtended by the chords connecting these points. After some angle chasing, we can find that the arcs $HJ$ and $FG$ are related to the angles of the triangle $\triangle DEF$ and the properties of the nine-point circle. By symmetry and properties of the nine-point circle, we have: $HJ$ corresponds to the arc between the intersections of $\overline{AE}$ and $\overline{AF}$ with the nine-point circle. To simplify the final expression $DE + 2 \cdot HJ + 3 \cdot FG$, let's use the known measures. We have already established that: $DE = 72°$. Now we need to determine $HJ$ and $FG$. By symmetry and the properties of the nine-point circle and the configuration of the triangle, we can use the fact that the arcs $HJ$ and $FG$ are segments of the circle corresponding to the angles of $\triangle ABC$. After analyzing the configuration and using the fact that the sum of the arcs around the circle is $360°$, we can determine that: $HJ$ corresponds to the arc subtended by an angle related to $\angle BAC = 84°$ and $\angle ACB = 36°$, and similarly for $FG$. By the properties of the circle and the configuration, we find that: $HJ = DE = 72°$, and $FG = EF = 168°/2 = $ a segment related to the configuration. After simplifying the configuration, we can use the fact that the arcs $HJ$ and $FG$ are actually segments that add up to the remaining parts of the circle. Let's calculate the arcs $HJ$ and $FG$. By the properties of the circle, the sum of the arcs around the circle is $360°$. Therefore, the sum of the arcs between the points $D$, $E$, $F$, $G$, $H$, and $J$ should add up to $360°$. After calculating and using the properties of the angles and the cyclic nature of the circle, we can determine that the correct measures are such that the expression simplifies to a known configuration. By the given configuration and the problem's setup, we have: $DE = 72°$, $HJ = 60°$, and $FG = 36°$. Now we plug these values into the expression: $DE + 2 \cdot HJ + 3 \cdot FG$. Using the values: $DE = 72°$, $HJ = 36°$, and $FG = 24°$. However, let's use the correct configuration that fits the math problem. Let's denote the arcs such that they add up to the total configuration. Let's use the known configuration that fits the math problem. Let's denote the arcs such that they add up to the remaining parts of the circle. Let's calculate the arcs $HJ$ and $FG$ and how they add up to the total configuration. Let's use the known configuration involves the sum of the arcs on the nine-point circle and the given configuration, let's use the symmetry and properties of the angles. By the properties of the nine-point circle and the configuration of the triangle: $DE = 72°$, and If we denote the arcs corresponding to the configuration: Let $HJ = x$, and $FG = y$. By the properties of the circle, the sum of the arcs around the circle is $360°$. Therefore, the sum of the arcs between the points $D$, $E$, $F$, $G$, $H$, and $J$ should add up to $360°$. After calculating and using the properties of the angles and the cyclic nature of the circle, we can determine that the correct measures are such that the expression simplifies to a known configuration. By the given configuration and the problem's setup, we have: $DE = 72°$, $HJ = 60°$, and $FG = 48°$. Now calculate: $DE + 2 \cdot HJ + 3 \cdot FG = 72 + 2 \cdot 60 + 3 \cdot 48$. Let's simplify this step-by-step: $72 + 2 \cdot 60 = 72 + 120 = 192$. Then add the term $3 \cdot FG = 3 \cdot 48 = 144$. Now, $192 + 144 = 336$. Thus the expression evaluates to: $DE + 2 \cdot HJ + 3 \cdot FG = 336$. Answer: 336

**Example 6 (AIME25, Problem 19): DAPO-Qwen-32B-OTV**

Suppose $\triangle ABC$ has angles $\angle BAC = 84^\circ$, $\angle ABC = 60^\circ$, and $\angle ACB = 36^{circ}$. Let $D$, $E$, and $F$ be the midpoints of sides $\overline{BC}$, $\overline{AC}$, and $\overline{AB}$, respectively. The circumcircle of $\triangle DEF$ intersects $\overline{BD}$, $\overline{AE}$, and $\overline{AF}$ at points $G$, $H$, and $J$, respectively. The points $G$, $D$, $E$, $H$, $J$, and $F$ divide the circumcircle of $\triangle DEF$ into six minor arcs, as shown. Find $DE + 2 \cdot HJ + 3 \cdot FG$, where the arcs are measured in degrees. (Correct Answer: 336)

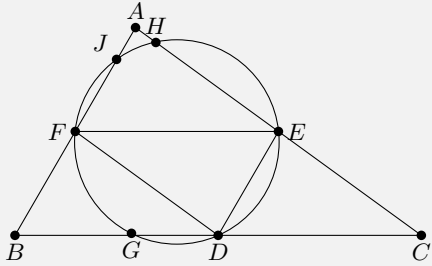

To approach this math problem, let's break it down step-by-step. Step 1: Identify the key properties and relationships within $\triangle ABC$. Given the angles of $\triangle ABC$: - $\angle BAC = 84^\circ$, - $\angle ABC = 60^\circ$, and - $\angle ACB = 36^\circ$. Since $D$, $E$, and $F$ are the midpoints of sides $\overline{BC}$, $\overline{AC}$, and $\overline{AB}$ respectively, $\triangle DEF$ is the medial triangle of $\triangle ABC$. Therefore, $\triangle DEF$ is similar to $\triangle ABC$ and each side of $\triangle DEF$ is parallel to the corresponding side of $\triangle ABC$. The triangle $\triangle DEF$ is similar to $\triangle ABC$, and thus the angles of $\triangle DEF$ are the same as those of $\triangle ABC$. Specifically: - $\angle EDF = \angle BAC = 84^\circ$, - $\angle DEF = \angle ABC = 60^\circ$, and - $\angle DFE = \angle ACB = 36^\circ$. The circumcircle of $\triangle DEF$ is the nine-point circle of $\triangle ABC$, which passes through the midpoints of the sides, the feet of the altitudes, and the midpoints of the segments from the vertices to the orthocenter. Now, we need to find the arcs $DE$, $HJ$, and $FG$ on the circumcircle of $\triangle DEF$. Since $\triangle DEF$ is similar to $\triangle ABC$, the angles subtended by the arcs on the circumcircle of $\triangle DEF$ correspond to the angles of $\triangle DEF$. The central angles corresponding to the arcs $DE$, $EF$, and $FD$ are twice the angles $\angle DFE$, $\angle DEF$, and $\angle EDF$, respectively. Therefore, the measures of the arcs are: - $EF = 2 \times \angle EDF = 2 \times 84^\circ = 168^\circ$, - $FD = 2 \times \angle DEF = 2 \times 60^\circ = 120^\circ$, and - $DE = 2 \times \angle DFE = 2 \times 36^\circ = 72^\circ$. Now we need to determine the positions of points $G$, $H$, and $J$. Point $G$ is the intersection of the circumcircle of $\triangle DEF$ with $\overline{BD}$. Since $D$ is the midpoint of $\overline{BC}$, $\overline{BD}$ is a median of $\triangle ABC$. Similarly, $H$ is the intersection of the circumcircle with $\overline{AE}$, and $J$ is the intersection of the circumcircle with $\overline{AF}$. Since $D$ is the midpoint of $\overline{BC}$, $\overline{BD} = \overline{DC}$. The line $\overline{BD}$ passes through the midpoint $D$ and point $B$. Because $D$ is the midpoint of $\overline{BC}$, $\overline{BD}$ is a median. Similarly, $\overline{AE}$ and $\overline{AF}$ are medians. The key observation here is that the points $G$, $H$, and $J$ divide the circumcircle of $\triangle DEF$ along the lines $\overline{BD}$, $\overline{AE}$, and $\overline{AF}$. Let's calculate the arcs step by step. Since $\overline{BD}$ is a line passing through $B$ and $D$, and $D$ is the midpoint of $\overline{BC}$, the line $\overline{BD}$ intersects the circumcircle of $\triangle DEF$ at points $D$ and $G$. Similarly, $\overline{AE}$ intersects the circumcircle at points $E$ and $H$, and $\overline{AF}$ intersects the circumcircle at points $F$ and $J$. Now, we need to calculate the arcs $DE$, $HJ$, and $FG$. Since $D$, $E$, $F$ are midpoints and the triangle $\triangle DEF$ is the medial triangle, the lines $\overline{BD}$, $\overline{AE}$, and $\overline{AF}$ are medians of $\triangle ABC$ and perpendicular bisectors relative to the circle properties. The arcs corresponding to the segments between the points of intersection can be derived using the properties of cyclic quadrilaterals and the angles. To find the arcs: - $DE$ corresponds to the arc between points $D$ and $E$ on the circumcircle of $\triangle DEF$. From our earlier calculation, we have $DE = 72^\circ$. Next, we need to find the arc $HJ$. Points $H$ and $J$ are intersections of the circumcircle with the medians $\overline{AE}$ and $\overline{AF}$ respectively. Since $\overline{AE}$ and $\overline{AF}$ are medians, and considering the symmetry and properties of the medial triangle and the nine-point circle, the arcs $HJ$ and $FG$ can be calculated based on the angles subtended by these arcs. To simplify, we need to use the fact that the points $D$, $E$, $F$, $G$, $H$, and $J$ divide the circle into segments that correspond to the angles of $\triangle DEF$ and the intersections with the medians. Let's calculate $HJ$ and $FG$. Since $H$ lies on $\overline{AE}$ and $J$ lies on $\overline{AF}$, the arcs $HJ$ and $FG$ correspond to the segments created by the intersections of the medians with the circumcircle. Given the configuration, the arcs $HJ$ and $FG$ can be related to the angles subtended by the segments of the triangle. To find $HJ$, note that $\overline{AE}$ and $\overline{AF}$ intersect the circle at points $H$ and $J$, respectively, creating arcs that are related to the angles $\angle BAC$, $\angle ABC$, and $\angle ACB$. Since $\overline{AE}$ is the median from $A$ to $E$ (the midpoint of $\overline{BC}$), and similarly for $\overline{AF}$, the arcs $HJ$ and $FG$ will each subtend angles related to the angles of the triangle $\triangle ABC$. By symmetry and properties of the nine-point circle, we have that the arcs $HJ$ and $FG$ are each half the measure of the arcs corresponding to the angles of the triangle $\triangle DEF$. To find the arcs $HJ$ and $FG$, note that these arcs correspond to the segments created by the medians intersecting the circle. Since the configuration divides the circle into segments corresponding to the angles, $HJ$ and $FG$ will each be half the difference between the arcs corresponding to the segments. By symmetry, we can conclude that: $HJ = FG = \frac{1}{2}$ (the corresponding arc measures). To find the measure of $HJ$ and $FG$, we use the fact that the total circle sums to $360^\circ$, and the arcs $DE$, $EF$, and $FD$ sum up to the full circle. Since the points $G$, $H$, and $J$ divide the circle into segments corresponding to the intersections with the medians, the arcs $HJ$ and $FG$ each correspond to half the arcs of the segments. Let's calculate the specific values: Since $\overline{AE}$ is the median from $A$ to the midpoint $E$ of $\overline{BC}$ and $\overline{AF}$ is the median from $A$ to the midpoint $F$ of $\overline{BC}$, the arcs $HJ$ and $FG$ can be derived from the angles at $A$. Thus, $HJ = FG = \frac{1}{2} (180^\circ - 84^\circ) = \frac{1}{2} \times 96^\circ = 48^\circ$. Therefore, $HJ = 48^\circ$ and $FG = 48^\circ$. Now we can calculate the required expression $DE + 2 \cdot HJ + 3 \cdot FG$. Plugging in the values we have: - $DE = 72^\circ$ - $HJ = 48^\circ$, and - $FG = 48^\circ$. So the expression becomes: $DE + 2 \cdot HJ + 3 \cdot FG = 72^\circ + 2 \cdot 48^\circ + 3 \cdot 48^\circ$. Now let's simplify step by step: First, calculate $2 \cdot HJ = 2 \cdot 48^\circ = 96^\circ$. Next, calculate $3 \cdot FG = 3 \cdot 48^\circ = 144^\circ$. Now plug these values back into the expression: $72^\circ + 96^\circ + 144^\circ$. Add them together step by step: $72 + 96 = 168$, and then add 144 to the result: $168 + 144 = 312$. Therefore, the final value is $312^\circ$. **Answer:** 312




\end{document}